\active\gdef@{\mkern1mu}}
\DeclareMathOperator{\supp}{supp}
\DeclareMathOperator{\spn}{span}
\DeclareMathOperator{\aff}{aff}
\DeclareMathOperator{\conv}{conv}
\def\moverlay{\mathpalette\mov@rlay}
\def\mov@rlay#1#2{\leavevmode\vtop{%
   \baselineskip\z@skip \lineskiplimit-\maxdimen
   \ialign{\hfil$\m@th#1##$\hfil\cr#2\crcr}}}
\newcommand{\charfusion}[3][\mathord]{
    #1{\ifx#1\mathop\vphantom{#2}\fi
        \mathpalette\mov@rlay{#2\cr#3}
      }
    \ifx#1\mathop\expandafter\displaylimits\fi}
\newcommand{\cupdot}{\charfusion[\mathbin]{\cup}{\cdot}}
\newcommand{\hccpwl}{\mathcal{C}^+} 
\newcommand{\hcpwl}{\mathcal{C}} 
\newcommand{\MAX}{\mathcal{M}} 
\newcommand{\networksl}[2]{\check{\mathcal{N}}_{#1}({#2})} 
\newcommand{\networks}[2]{\mathcal{N}_{#1}({#2})} 
\newcommand{\Polytopes}{\mathcal{P}} 
\newcommand{\vPolytopes}{\mathcal{V}} 
\newcommand{\dualpolytopesl}[2]{\check{\mathcal{V}}_{#1}({#2})} 
\newcommand{\dualpolytopes}[2]{\mathcal{V}_{#1}({#2})} 
\newcommand{\cpwl}{continuous piecewise linear\xspace}
\newcommand{\relu}{ReLU\xspace}
\newcommand{\dvec}{\boldsymbol{d}}      
\newcommand{\rvec}{\boldsymbol{r}}      
\newcommand{\R}{\mathbb{R}}
\newcommand{\N}{\mathbb{N}}
\theoremstyle{definition}
\newtheorem{theorem}{Theorem}
\newtheorem{definition}[theorem]{Definition}
\newtheorem{lemma}[theorem]{Lemma}
\newtheorem{corollary}[theorem]{Corollary}
\def\thm@space@setup{\thm@preskip=0pt
\thm@postskip=0pt}
\definecolor{navy}{RGB}{38, 70, 83}
\definecolor{cyan}{RGB}{65, 138, 127}
\definecolor{gold}{RGB}{233, 196, 106}
\definecolor{apri}{RGB}{244, 162, 97}
\definecolor{clay}{RGB}{231, 111, 81}
\definecolor{MPIgreen}{RGB}{0,108,102}
\tikzstyle{vertex} = [draw=black, fill=black, inner sep=0.75mm, circle]
\tikzstyle{edge} = [black, thick]
\tikzstyle{arc} = [black, thick, shorten >=2mm, decoration={markings, mark=at position 1 with {\arrow{Triangle[length=2mm,width=0.8mm]};}}, postaction={decorate}]
\tikzstyle{labeledvertex} = [draw=black, thick, minimum size=4.5mm, inner sep=0pt, text centered, align=center, circle]
\tikzstyle{maxoutunit} = [draw, thick, minimum width=0.4cm, minimum height=0.8cm,align=center, rounded corners=3pt, fill=none, label=center:\text{\begin{tikzpicture}
\@citea\NAT@hyper@{%
     \NAT@nmfmt{\NAT@nm}%
     \hyper@natlinkbreak{\NAT@aysep\NAT@spacechar}{\@citeb\@extra@b@citeb}%
     \NAT@date}}
\@citea\NAT@nmfmt{\NAT@nm}%
\NAT@spacechar\NAT@hyper@{\NAT@date}}{}{}
\@citea\NAT@hyper@{%
     \NAT@nmfmt{\NAT@nm}%
     \hyper@natlinkbreak{\NAT@spacechar\NAT@@open\if*#1*\else#1\NAT@spacechar\fi}%
       {\@citeb\@extra@b@citeb}%
     \NAT@date}}
\@citea\NAT@nmfmt{\NAT@nm}%
\fi\NAT@hyper@{\NAT@date}}
\newlength{\negph@wd}
\DeclareRobustCommand{\negphantom}[1]{%
  \ifmmode
    \mathpalette\negph@math{#1}%
  \else
    \negph@do{#1}%
  \fi
}
\newcommand{\negph@math}[2]{\negph@do{$\m@th#1#2$}}
\newcommand{\negph@do}[1]{%
  \settowidth{\negph@wd}{#1}%
  \hspace*{-\negph@wd}%
}
\title{\Large On the expressivity of sparse maxout networks}
\author{%
\begin{minipage}[t]{0.48\textwidth}\centering
\normalsize \textbf{Moritz Grillo}\\
\footnotesize Max Planck Institute for\\ Mathematics in the Sciences\\
\footnotesize \texttt{moritz.grillo@mis.mpg.de}
\end{minipage}\hfill
\begin{minipage}[t]{0.48\textwidth}\centering
\normalsize \textbf{Tobias Hofmann}\\
\footnotesize TU Berlin\\
\footnotesize \texttt{tobias.hfm@icloud.com}
\end{minipage}
}
\date{}
\begin{document}
\maketitle

\begin{abstract}
\noindent
\textbf{Abstract.} We study the expressivity of sparse maxout networks, where each neuron takes a fixed number of inputs from the previous layer and employs a, possibly multi-argument, maxout activation. This setting captures key characteristics of convolutional or graph neural networks.
We establish a duality between functions computable by such networks and a class of virtual polytopes, linking their geometry to questions of network expressivity.
In particular, we derive a tight bound on the dimension of the associated polytopes, which serves as the central tool for our analysis.
Building on this, we construct a sequence of depth hierarchies. While sufficiently deep sparse maxout networks are universal, we prove that if the required depth is not reached, width alone cannot compensate for the sparsity of a fixed indegree constraint. \\

\noindent
\textbf{Keywords.} Expressivity of neural networks, maxout networks, polyhedral geometry, virtual polytopes \\

\noindent
\textbf{MSC 2020.} 68T07, 52B05, 14T99

\end{abstract}

\section{Introduction}\label{sec:intro}

Despite the impressive progress that has been made in the practical use of neural networks, our theoretical understanding of their inner mechanisms and why they generalize so well is still far from being complete.
Universal approximation theorems, such as those of \citet{cybenko1989approximation}, \cite{hornik1990universal}, or \cite{leshno1993multilayer} marked major steps towards explaining the expressivity of neural networks.
Yet, making precise what neural networks can or cannot do goes beyond approximation results and requires addressing questions of exact representability.
That is, given a specific neural network architecture, what is the exact class of functions that can be represented by that architecture?
Recent work, for example, \cite{telgarsky2016benefits}, \cite{arora2018understanding}, or \cite{hertrich2021towards}, have provided new insights into how architectural parameters such as depth and width influence expressivity.

In contrast, this article investigates how network sparsity affects expressivity and how it interacts with architectural parameters such as depth and width.
Beyond that, we are interested in the effects of using multi-argument maxout activation functions, which can be seen as generalized rectified linear units, as introduced by \cite{goodfellow2013maxout}.
We call a network sparse, or more specifically indegree-$d$-constrained, if each of its neurons depends on a fixed number $d$ of outputs from the previous layer. This framework interpolates between fully connected architectures ($d$ equal to the previous layer’s width) and extremely sparse ones ($d=2$).
A precise formalization of the architecture we are concerned with follows in Section~\ref{sec:sparse_networks}.
Our considerations are motivated by two aspects.

First, one might find it interesting that explicit attempts to construct neural networks of minimum depth, such as those by \cite{arora2018understanding} or the recent improvement by \cite{bakaev2025better} are nonetheless sparse.
In other words, when not restricting the width of the neural network, sparse networks can compute any \cpwl function with logarithmic depth. 

Second, sparsity is a characteristic property of many practically relevant network types.
For example, a convolutional layer involving a typical $3\times 3$ kernel is indegree-$9$-constrained.
Another example are graph neural networks, which, depending on the specific variant, often come with indegree constraints stemming from their localized feature update rules.
The preceding examples also provide relevant network types in which indegree-constrained operations are used together with pooling steps. In our model, this is reflected by incorporating multi-argument maxout activations.

Note also that the architecture we investigate allows for varying indegree constraints across layers.
The only exception is the first and last layer, where indegree constraints are not imposed.
Thus, our model interpolates between fully connected networks on those in which each neuron depends on only two inputs from the previous layer, apart from the first and last one.
Not enforcing indegree constraints in the first and last layer has the following reasons.

From a practical perspective, even networks that otherwise incorporate sparse operations, such as convolutional or graph neural networks often bundle information via fully connected layers into one or few neurons in the final layer.
See \cite{krizhevsky2012imagenet} for a prominent example.
Allowing the first layer to be fully connected can be seen as allowing a general feature transformation or preprocessing step, for instance, in which an image is patched or graph nodes are embedded into a suitable feature space.

Another reason is that our theoretical interest lies more on structural effects of sparsity across the layers of a network, rather than on the information-theoretic limitations imposed by a restricted receptive field in the first layer.
For example, given an input with $n$ pixels and a network in which each neuron, including those in the first layer, depend on at most two inputs from the preceding layer, it is clear that a network with depth smaller than $\log_2(n)$ is insufficient to capture every aspect of the input.
But this says little about structural effects within the network, but merely reflects the restrictive reception in the beginning of information processing chain. 

\paragraph{Our Contributions.}
Using a duality between positively homogeneous convex \cpwl functions and polytopes, we establish an isomorphism between functions computed by sparse maxout networks and a class of \emph{virtual polytopes} (\Cref{thm:iso_sparse_virtual}), which are formal differences of polytopes.
We then study the dimensions of these virtual polytopes in \Cref{sec:dimension}, where \Cref{thm:dim_bound} provides an upper bound and \Cref{thm:dimboundattained} shows that this bound is attained.
These results form the foundation of our analysis of how sparsity affects expressivity in \Cref{sec:expressivity}.

While our general framework, introduced in Section~\ref{sec:sparse_networks}, allows for varying indegree and rank constraints across layers, and thus involves vector-valued parameters, the following outline of our main results suppresses this notation for simplicity. For any indegree constraint~$d\in \N$, number of arguments~$r\in\mathbb{N}$ in the maxout activation, and input dimension~$n$, let $\networks{n}{d,r,\ell}$ denote the class of functions representable by such networks with~$\ell$ hidden layers. Our main results are as follows:
\begin{itemize}
\item For any~$d, r$, and~$\ell$, there exist an input dimension~$n$ and a function~$f \colon \R^n \to \R$ computable by a fully connected network with two hidden layers and width~$O(n)$ that is not contained in~$\networks{n}{d,r,\ell}$ (\Cref{thm:widthcannotcompensate}). 
This, in particular, implies that sparse shallow networks cannot represent all \cpwl functions, regardless of their width.
\item For~$d = r = 2$, we fully characterize the class of functions~$\networks{n}{2,2,\ell}$ (\Cref{thm:Nl22max2^l}), showing a sharp separation for each~$\ell \in \{1, \ldots, \lceil \log_2 (n+1) \rceil\}$.
\end{itemize}

In words, when allowing arbitrary input dimension, for any fixed depth and indegree constraint, the resulting sparsity cannot be compensated by increasing width.
This, in particular, also implies that for a fixed indegree, there is no depth that fully compensates for sparsity.
Overall, our results demonstrate that sparsity has a decisive impact on the expressivity of neural networks.

\paragraph{Further Related Work}
An extensive body of research has examined the effect of depth \cite{hertrich2021towards,haase2023lower,averkov2025on,grillo2025depthboundsneuralnetworksbraid,bakaev2025better} as well as the relationship between depth and size in neural networks, showing that deeper architectures can represent certain functions exponentially more compactly than shallow ones \cite{montufar14number,telgarsky2016benefits,arora2018understanding,ergen24topology}.  Another perspective considers bounds on the total size of the networks \cite{hertrich2023provably,hertrich2024relu,10.5555/3600270.3600790,brandenburg2024decompositionpolyhedrapiecewiselinear}.

A related methodological development connects questions of expressivity to Newton polytopes of functions realized by neural networks through tropical geometry. This approach was initiated by \cite{zhang2018tropical} and further developed by \cite{maragos2021tropical}. It has since been applied to the study of decision boundaries, depth and size lower bounds, the number of linear regions, and approximation capabilities \cite{montufar2022sharp,misiakos2022neural,haase2023lower,brandenburg2024real,valerdi2024minimal,koutschan2023representing,hertrich2025neuralnetworksvirtualextended,balakin2025maxoutpolytopes}.

\section{Sparse networks and \cpwl functions}\label{sec:sparse_networks}

A \emph{polyhedron}~$P$ is the intersection of finitely many closed halfspaces and a \emph{polytope} is a bounded polyhedron.
A hyperplane \emph{supports}~$P$ if it bounds a closed halfspace containing~$P$, and
any intersection of~$P$ with such a supporting hyperplane yields a \emph{face}~$F$ of~$P$. 
A \emph{polyhedral complex} $\mathcal{P}$ is a finite collection of polyhedra such that (i)~$\emptyset \in \mathcal{P}$, (ii)~if~$P \in \mathcal{P}$, then all faces of $P$ are in $\mathcal{P}$, and (iii)~if $P,Q \in \mathcal{P}$, then $P \cap Q$ is a face both of $P$ and $Q$.
For additional background on polyhedral geometry, we refer to \cite{schrijver1998theory} and \cite{ziegler2012lectures}.
A continuous function $f\colon\R^n\to\R$ is called \emph{\cpwl} if there exists a polyhedral complex $\mathcal{P}$ such that the restriction of $f$ to each full-dimensional polyhedron $P\in\mathcal{P}$ is an affine function.
For a convex \cpwl function $f$, there is a unique coarsest such polyhedral complex $\mathcal{P}$, that is, $f$ is affine within each maximal $P\in\mathcal{P}$, but not within $Q\supsetneq P$ for any $P\in\mathcal{P}^n$. We call $\mathcal{P}$ \emph{the} polyhedral complex \emph{underlying} $f$ 

A function $f\colon\R^n\to\R$ is called \emph{positively homogeneous} if $f(\lambda x)=\lambda f(x)$ for any $\lambda\geq 0$.
We denote by $\hcpwl_n$ the vector space of all positively homogeneous \cpwl functions from $\R^n$ to $\R$ and by $\hccpwl_n$ the set of convex positively homogeneous \cpwl functions.

Maxout activation functions can be regarded as multi-argument generalizations of \relu activation functions.
For $r,m\in\N_0\coloneqq\N\cup\{0\}$, a \emph{rank-$r$-maxout layer} having $n$ inputs and $m$ outputs with weights $a_{ij} \in \R^n$, $i\in[m]$, $j\in[r]$ computes a function $\R^n\to \R^m$,
\begin{equation*}
    x\mapsto \big[\max\limits_{j\in\,[r]}\{a_{ij}^\top\,x\}\big]_{i=1}^m.
\end{equation*}
Denoting the \emph{support} of a vector $x\in\R^n$ by $\supp(x)\coloneqq\{i\in[n]:x_i\neq 0\}$, we call a layer \emph{indegree-$d$-constrained}, for some $d \in \N$, if $|\bigcup_{j \in [r]} \supp(a_{ij})| \leq d$ for all $i\in[m]$.
For $\ell \in\N_0$, $\rvec \in \N^\ell$, a rank-$\rvec$-maxout network with $\ell$ hidden layers computes a function $\R^{n_0} \to \R$, $f_{\ell+1} \circ \ldots \circ f_1$ where $f_i$, $i\in[\ell]$, is a rank-$r_i$-maxout layer and $f_{\ell+1}$ is a linear map from $\R^{n_\ell}$ to $\R$.
We denote the class of functions $f\colon\R^n\to \R$ that are representable by such an architecture by~$\networks{n}{\ell,\rvec}$.
For $\dvec \in \N^\ell$, the network is called \emph{indegree-$\dvec$-constrained} if $f_i$ is indegree-$d_i$-constrained.
The class of functions $f\colon\R^n\to \R$ that are representable by such an architecture is denoted by~$\networks{n}{\ell, \dvec,\rvec}$.
As discussed in the introduction, we allow the first layer to be fully connected, meaning that $d_1=n$, throughout this work.
In our investigation, we often specify $\dvec=(d_1,\ldots,d_\ell),\rvec=(r_1,\ldots,r_\ell)\in\N^\ell$ for a network with $\ell$ hidden layers.
In various arguments, however, we focus on subnetworks with~$\ell'<\ell$ hidden layers.
Then, strictly speaking, in $\networks{n}{\ell',\dvec,\rvec}$, the dimensions of $\dvec$ and $\rvec$ do not match the number of hidden layers $\ell'$.
To avoid overloading the notation, we nevertheless adopt the notation $\networks{n}{\ell',\dvec,\rvec}=\networks{n}{\ell',(d_1,\ldots,d_{\ell'}),(r_1,\ldots,r_{\ell'})}$.
The same convention is used for $\networks{n}{\ell,\rvec}$ and further polytope classes depending on $\dvec$ and $\rvec$, which are introduced in Section~\ref{sec:polytopesandsupportfunctions}.
To denote indegree and rank constraints succinctly, we use boldface numbers for vectors, of appropriate size, containing only the specified number, except that we retain $d_1=n$ if we denote indegree constraints.
For example,~$\dvec=\mathbf{2}$ is to be read as~$\dvec=[n,2,\ldots,2]$ and $\rvec=\mathbf{2}$ is to be read as $\rvec=[2,\ldots,2]$.
For $\ell=0$, there is no hidden layer, and $\networks{n}{0, \dvec,\rvec}=\networks{n}{0, \rvec}$ consists of the linear functions from $\R^n$ to $\R$.
An example for a network from~$\networks{n}{\ell, \textbf{2},\textbf{2}}=\networks{n}{\ell,[n,2,\ldots,2],\textbf{2}}$ is given in Figure~\ref{fig:network}.
One goal of our work is to understand which functions lie in $\networks{n}{\ell, \dvec,\rvec}$. We know that members of this class are positively homogeneous \cpwl, as they are compositions of such functions. To see whether a converse inclusion holds, for certain parameters $\ell$, $\dvec$, and $\rvec$, we rely on the following result by \cite{wang2005generalization}.

\begin{figure}
\centering
\includegraphics[]{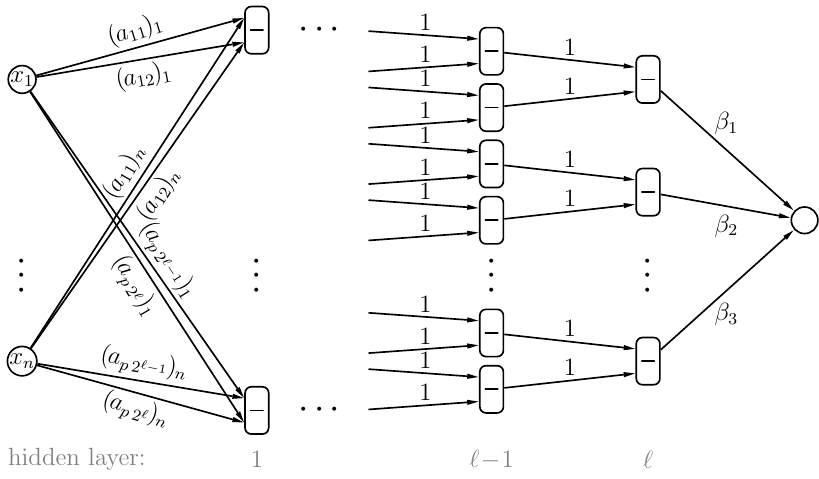}
    \caption{A network from the class $\networks{n}{\ell,\textbf{2},\textbf{2}}$}
    \label{fig:network}
\end{figure}

\begin{theorem}[\cite{wang2005generalization}]\label{thm:WangSun} Every function $f\in\hcpwl_n$ can be written as
\begin{equation*}
    f(x) = \sum_{i=1}^p \beta_i\max\{a_{i,1}^\top x,\ldots,a_{i,n+1}^\top x\}
\end{equation*}
for suitable $p\in\N$, $\beta_i\in\R$, and $a_{i,j}\in\R^n$, $i\in[p]$, $j\in[n+1]$.
\end{theorem}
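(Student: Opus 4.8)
The plan is to (i) write $f$ as a difference of two convex functions, (ii) realize each convex piece as the support function of a polytope, hence as a maximum of linear forms, and (iii) reduce the number of arguments of each maximum down to $n+1$.

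For (i) one uses the positively homogeneous form of the standard DC decomposition, namely $\hcpwl_n = \hccpwl_n - \hccpwl_n$: given $f\in\hcpwl_n$, choose a strictly convex positively homogeneous \cpwl function $g$ whose linearity regions refine those of $f$ and whose kinks are large enough; then $g+f$ and $g-f$ are both convex, so $f=\tfrac12\big((g+f)-(g-f)\big)$. For (ii), every $g\in\hccpwl_n$ equals the support function $h_P$ of a polytope $P\subseteq\R^n$, i.e.\ $g(x)=\max_{v\in\operatorname{vert}(P)}v^\top x$; and a simplex in $\R^n$ has at most $n+1$ vertices, so its support function is a maximum of at most $n+1$ linear forms. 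Hence the theorem is equivalent to the assertion that the support function of an arbitrary polytope in $\R^n$ is a finite $\mathbb{Z}$-linear combination of support functions of simplices.

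For (iii) I would induct on $\dim P$ and, for fixed dimension, on the number of vertices of $P$. If $P$ is not full-dimensional, its support function depends only on the coordinates spanning $\aff(P)$, and the inductive hypothesis applies in lower dimension; if $P$ is full-dimensional with at most $n+1$ vertices, it is already a simplex. If $P$ has $m\ge n+2$ vertices, Radon's theorem partitions the vertex set into two nonempty parts whose convex hulls $Q_1,Q_2$ intersect; translating by a point of $Q_1\cap Q_2$ — which alters $h_P$ only by a linear form, itself the support function of a single point — we may assume $0\in Q_1\cap Q_2$. Each $Q_i$ has strictly fewer vertices than $P$, so $h_{Q_1},h_{Q_2}$ are covered by the induction, and since $P=\conv(Q_1\cup Q_2)$,
\[
 h_P \;=\; \max\{h_{Q_1},h_{Q_2}\} \;=\; h_{Q_1}+h_{Q_2}-\min\{h_{Q_1},h_{Q_2}\}.
\]
It then remains to place the nonconvex correction term $\min\{h_{Q_1},h_{Q_2}\}$ into the subgroup generated by support functions of simplices.

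This final point is where I expect the main difficulty to lie. The tempting shortcut — triangulate $P$ using only its own vertices and apply inclusion–exclusion as one does for indicator functions — does \emph{not} work, since the support function is not a valuation: $h_{A\cup B}+h_{A\cap B}\ne h_A+h_B$ in general, and already a pentagon refutes the analogous identity over a pulling triangulation. A viable route is instead to pass to a sufficiently fine simplicial polytopal refinement $\Sigma$ of the normal fans of $Q_1$ and $Q_2$ (fine enough that $\min\{h_{Q_1},h_{Q_2}\}$ is linear on every cone of $\Sigma$) and to argue inside the finite-dimensional space $\operatorname{PL}(\Sigma)$ of functions linear on each cone of $\Sigma$, using that the convex members of $\operatorname{PL}(\Sigma)$ — support functions of polytopes whose normal fan is coarsened by $\Sigma$ — span it; bringing this spanning statement all the way back to simplices is the technical heart of the argument. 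Granting (iii), we finally get $f=h_{P_1}-h_{P_2}$ with each $h_{P_i}$ a $\mathbb{Z}$-linear combination of maxima of at most $n+1$ linear forms, which is precisely the asserted representation.
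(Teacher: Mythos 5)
The paper does not prove this statement at all: it is imported verbatim from \citet{wang2005generalization}, so there is no in-paper proof to compare against. Judged on its own, your proposal is not a proof but a plan with an acknowledged hole exactly at the step that carries all the difficulty. Steps (i) and (ii) are fine in substance (though note that a positively homogeneous function cannot be \emph{strictly} convex, since it is linear along rays through the origin; what you need is a convex $g$ with kinks on a refinement of the break hyperplanes of $f$, steep enough that $g\pm f$ stay convex, and such a $g$ can itself be taken as a sum of two-term maxima). The reduction of the theorem to the claim that every support function lies in the linear span of support functions of simplices is also correct and is indeed the standard way to view the result.

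The gap is step (iii). After the Radon split you are left with $\min\{h_{Q_1},h_{Q_2}\}$, which is positively homogeneous and \cpwl but in general not convex, hence not the support function of any polytope; your induction is indexed by (dimension, number of vertices) of \emph{polytopes}, so it simply does not apply to this term. The repair you sketch --- decompose $\min\{h_{Q_1},h_{Q_2}\}$ as $h_A-h_B$ inside $\operatorname{PL}(\Sigma)$ for a common simplicial refinement $\Sigma$ of the normal fans --- does not close the induction either: the polytopes $A$ and $B$ produced by such a DC decomposition can have far more vertices than $P$ (their normal fans must refine $\Sigma$), so the quantity you are inducting on can increase, and nothing bounds the recursion. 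To make this route work you would need a genuinely new ingredient, e.g.\ a monovariant that provably decreases, or the explicit inclusion--exclusion identity of Wang and Sun expressing $\max\{\ell_1,\dots,\ell_k\}$ for $k\ge n+2$ affine forms as an integer combination of maxima over subsets of size at most $n+1$, which is proved by a separate argument on the hyperplane arrangement and is precisely the ``technical heart'' you defer. As it stands, the proposal identifies the right target but does not establish it.
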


Building on this representation, \cite{arora2018understanding} give a one-to-one correspondence between the class of functions computable by \relu neural networks and \cpwl functions.
The following result is a refinement of their result, implying that deep indegree-constrained networks are as expressive as networks that are not indegree-constrained.

\begin{theorem}[C.f. \cite{arora2018understanding}]\label{thm:deep_sparse_universal}
For given $\dvec,\rvec\in\N^\ell$, if $n < \prod_{i=1}^\ell \min\{d_i,r_i\}$, then
\begin{equation*}
    \networks{n}{\ell, \rvec} = \networks{n}{\ell,\dvec, \rvec}=\hcpwl_n.
\end{equation*}
\begin{proof}
By definition, every function in $\networks{n}{\ell, \rvec}$ or $\networks{n}{\ell,\dvec, \rvec}$ is in $\hcpwl_n$.
Since, of course, $\networks{n}{\ell,\dvec, \rvec}\subseteq \networks{n}{\ell,\rvec}$, we need to verify that $\hcpwl_n\subseteq\networks{n}{\ell,\dvec, \rvec}$.
By Theorem~\ref{thm:WangSun}, every function $f\in\hcpwl_n$ is a linear combination of maxima of at most $n+1$ linear functions, $f(x) = \sum_{i=1}^p \beta_i\max\{a_{i,1}^\top x,\ldots,a_{i,n+1}^\top x\}$.
In the following, we argue that such a function can be represented by a function $f_{\ell+1}\circ\ldots\circ f_1\in \networks{n}{\ell,\dvec, \rvec}$.
The inner linear functions $a_{i,j}^\top\,x$, $i\in[p]$, $j\in[n+1]$, can be implemented up to the first hidden layer, because $f_1$ involves a general linear transformation without indegree constraints.
The outer linear combination can be represented by the linear function $f_{\ell+1}$.
Finally, the $p$ occurring maxima are implementable by $f_\ell\circ\ldots\circ f_1$, since the maximum function is associative, that is $\max\{x,y,z\}=\max\{\max\{x,y\},z\}$ for all $x,y,z\in \R$.
Hence the maximum of $n+1$ inputs can be obtained sequentially by composing maxima across layers, with all weights set to one.
For $\dvec=\rvec=\mathbf{2}$, this is illustrated in Figure~\ref{fig:network}.
With each layer $i\in[\ell]$, the number of inputs that can be processed grows by a factor of $\min\{d_i,r_i\}$, since each neuron takes at most $d_i$ inputs and computes a maximum over at most $r_i$ inputs.
In total, $n+1$ inputs can be compared if $n+1\leq \prod_{i=1}^\ell\min\{d_i,r_i\}$, as is our claim.
\end{proof}
\end{theorem}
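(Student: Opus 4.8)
The plan is to establish the chain of inclusions
\[
\hcpwl_n \;\subseteq\; \networks{n}{\ell,\dvec,\rvec} \;\subseteq\; \networks{n}{\ell,\rvec} \;\subseteq\; \hcpwl_n
\]
under the hypothesis $n+1\le\prod_{i=1}^\ell\min\{d_i,r_i\}$. The two inclusions on the right need no work: dropping the indegree constraints only enlarges the function class, and every rank-$\rvec$-maxout network computes a composition of positively homogeneous \cpwl maps, hence lies in $\hcpwl_n$. Thus the entire content of the theorem is the leftmost inclusion, i.e.\ realizing an arbitrary $f\in\hcpwl_n$ by a network from $\networks{n}{\ell,\dvec,\rvec}$.

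First I would apply \Cref{thm:WangSun} to write $f(x)=\sum_{i=1}^p\beta_i\max\{a_{i,1}^\top x,\ldots,a_{i,n+1}^\top x\}$ for suitable $p$, $\beta_i$, and $a_{i,j}$. This splits the task into two independent pieces. The outer linear combination $\sum_i\beta_i(\cdot)$ is exactly what the unconstrained final map $f_{\ell+1}$ provides, so it suffices to have the $\ell$ hidden layers output, in $p$ distinct coordinates, the $p$ values $\max\{a_{i,1}^\top x,\ldots,a_{i,n+1}^\top x\}$. Each of these I would compute by a separate depth-$\ell$ ``max-tree''; since the widths of the hidden layers are unconstrained, the $p$ trees can be placed side by side, sharing only the input~$x$.

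The heart of the argument is the balanced-tree construction itself, which rests on the associativity of $\max$. The key point is that a single maxout neuron with rank $r_i$ and indegree bound $d_i$ can output the maximum of any $s\le\min\{d_i,r_i\}$ coordinates of the previous layer, simply by setting $s$ of its weight vectors to the corresponding standard basis vectors and letting the remaining ones repeat one of them; such a neuron reads exactly those $s$ coordinates and is therefore indegree-$d_i$-constrained. For the first layer, where $d_1=n$ imposes nothing, the same move lets a neuron compute the maximum of up to $\min\{n,r_1\}$ arbitrary linear forms $a^\top x$, which simultaneously performs the initial linear transformation. Consequently, passing from layer $i-1$ to layer $i$ multiplies by $\min\{d_i,r_i\}$ the number of linear forms that a neuron's value can aggregate, so after all $\ell$ hidden layers a single neuron realizes a maximum over up to $\prod_{i=1}^\ell\min\{d_i,r_i\}\ge n+1$ of them, as required. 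I expect the only real obstacle to be organizational rather than conceptual: one has to lay out the $p$ trees so that every neuron in every hidden layer respects its indegree bound, treat the generic case $n+1<\prod_{i=1}^\ell\min\{d_i,r_i\}$ in which the trees are not full (pad by repeating leaves), and dispose of short branches as well as of layers with $\min\{d_i,r_i\}=1$ by inserting rank-$1$ neurons that merely copy a value forward. None of this introduces a genuine difficulty, precisely because $\max$ is associative and the first and last layers carry no constraints.
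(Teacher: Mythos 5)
Your proposal is correct and follows essentially the same route as the paper's proof: reduce to the inclusion $\hcpwl_n\subseteq\networks{n}{\ell,\dvec,\rvec}$, invoke Theorem~\ref{thm:WangSun}, handle the inner linear forms in the unconstrained first layer and the outer linear combination in the final layer, and realize each $(n+1)$-ary maximum by a depth-$\ell$ tree of maxout neurons using associativity of $\max$, with a factor of $\min\{d_i,r_i\}$ gained per layer. Your added remarks on padding incomplete trees and forwarding values with rank-$1$ neurons only make explicit the organizational details the paper leaves implicit.
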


Theorem~\ref{thm:deep_sparse_universal} says that, for any given input dimension, indegree and rank constraint, sparse maxout networks are universal if they are sufficiently deep.
Nevertheless, in Theorem~\ref{thm:widthcannotcompensate} it is shown that for any given depth $\ell\in\N_0$, indegree and rank constraints $\dvec,\rvec\in\N^\ell$, there is an input dimension $n\in\N$ and function $f\in\networks{n}{2,\mathbf{2}}$ which is not in $\networks{n}{\ell,\dvec,\rvec}$.
In words, sparsity has a decisive restrictive effect on the expressivity of a network.
Our work also provides particular insight into the class $\networks{n}{\ell,\mathbf{2},\mathbf{2}}$. We characterize it by comparing it with the vector space
\begin{equation*}
    \MAX_n(m)\coloneqq\big\{f\colon\R^n\to\R\,:\,f(x)=\sum_{i=1}^p \beta_i\max\limits_{j\in[m]}\{a_{ij}^\top x\},~p\in\N,~\beta_i\in\R,~a_{ij}\in\R^n\big\}.
\end{equation*}
For any $n\in\N$ and $\ell\in\N_0$, Theorem~\ref{thm:Nl22max2^l} says that
\begin{equation*}
    \networks{n}{\ell,\mathbf{2},\mathbf{2}}=\MAX_n(2^\ell).
\end{equation*}
The arguments leading to those results rely largely on polyhedral geometry.

\section{Virtual polytopes and support functions}\label{sec:polytopesandsupportfunctions}

The goal of this section is to establish a duality between specific virtual polytopes and the functions computable by related sparse maxout networks. For on overview of virtual polytopes, we refer to \cite{panina2015virtual}.

The Minkowski sum of two sets $X,Y\in\R^n$ is $X+Y=\{x + y: x\in X, y\in Y\}$.
By $\spn(X)$ we denote the smallest linear subspace containing~$X$, by $\aff(X)$ we denote the smallest affine subspace containing $X$, and by $\conv(X)$ we denote the convex hull of $X$.
A \emph{polytope} $P$ is the convex hull of finitely many points in $\R^n$.
We denote by $\Polytopes_n$ the set of all such polytopes.
The \emph{dimension} of a polytope $P$ is the dimension of its affine hull $\dim(\aff(P))$.
Note that $P+Q+(-Q)\neq P$ for $-Q\coloneqq\{-q:q\in Q\}$ and hence naive subtraction is not the inverse of Minkowski addition.
But there is a canonical way to introduce a formal Minkowski difference.
The set of polytopes together with Minkowski addition forms a commutative semigroup $S$, having the polytope that consists only of the zero vector as neutral element.
For any polytopes $P,Q,R$ in $\Polytopes_n$, there holds the cancellation law, which says that if $P+Q=R+Q$, then $P=R$.
This allows to construct the Grothendieck group of~$S$ by $\vPolytopes_n \coloneqq\{(P, Q) : P, Q \in \Polytopes_n\} / \sim$ where the equivalence relation $\sim$ is defined by
\begin{equation*}
    (P_1, Q_1) \sim (P_2, Q_2) \iff  P_1 + Q_2 = P_2 + Q_1.
\end{equation*}
The equivalence class represented by $(P,Q)$ is denoted by $P-Q$ and called \emph{virtual polytope}.
Furthermore, we denote $\vPolytopes\coloneqq\bigcup_{n\in\N}\vPolytopes_n$.
As a virtual polytope is not a point set, further definitions have to be adjusted appropriately.
For virtual polytopes $P_1-Q_1$ and $P_2-Q_2$ and $\lambda\in\R$, one defines the operations
\begin{align*}
    \lambda(P_1-Q_1)      &\coloneqq (\lambda P_1 - \lambda Q_1),\\
    (P_1-Q_1) + (P_2-Q_2) &\coloneqq (P_1+P_2) - (Q_1+Q_2),\\
    \conv(P_1-Q_1,P_2-Q_2)&\coloneqq \conv(P_1+Q_2,P_2+Q_1)-(Q_1+Q_2).
\end{align*}
For more details about virtual polytopes, we refer to the survey of \cite{panina2015virtual}.
The \emph{support function} $f_P \colon \R^n \to \R$ of a polytope $P = \conv\{a_j : j\in[p]\}$ is given by
\begin{equation*}
    f_P(x)\coloneqq \max\{a_j^\top x : j\in[p]\}.
\end{equation*}
Note that support functions depend only on the extreme points of a given polytope.
So they are independent of a specific representation.
More precisely, if $\conv\{a_j : j\in[p]\}$ and $\conv\{b_k : k\in[q]\}$ describe the same polytope, then the corresponding support functions are identical.
In tropical geometry, functions of the form $\max\{a_j^\top x : j\in[p]\}$ are also called \emph{tropical polynomial}.
As for classical polynomials, one defines the \emph{Newton polytope} $P(f)$ of $f(x) = \max\{a_j^\top x : j\in[p]\}\in\hccpwl_n$ as
\begin{equation*}
    P(f)\coloneqq \conv\{a_j : j\in[p]\}.
\end{equation*}
The following theorem shows a duality between the functions in $\hccpwl_n$ and their Newton polytopes. 
\begin{theorem}[\cite{schneider2013convex}]\label{thm:duality}
The map $\Polytopes_n \to \hccpwl_n, P \mapsto f_P$ is a bijection and for all $P,Q \in \Polytopes_n$ and $\lambda \geq 0$, it holds that 
\begin{enumerate}[topsep=0pt,itemsep=3pt,partopsep=0pt, parsep=0pt]
    \item $f_{P+Q} = f_P + f_Q$,
    \item $f_{\lambda P} = \lambda f_P$, and 
    \item $f_{\conv(P\cup Q)} = \max\{f_P,f_Q\}$.
\end{enumerate}

\end{theorem}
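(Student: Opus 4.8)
The theorem packages two classical facts about the support-function map $P\mapsto f_P$ — its algebraic compatibility with Minkowski sum, nonnegative scaling, and convex hull, and its bijectivity — and I would establish these in that order. The identities (i)--(iii) are cleanest when read off the equivalent description $f_P(x)=\max\{y^\top x : y\in P\}$, which agrees with the vertex formula $\max\{a_j^\top x : j\in[p]\}$ because a linear functional on a polytope attains its maximum at a vertex (this also shows $f_P$ is independent of the chosen generating set, so the map is well defined on $\Polytopes_n$). From this form: (ii) is immediate, since a nonnegative scalar factors out of a maximum; (iii) holds because passing from $P\cup Q$ to $\conv(P\cup Q)$ does not change the value of a linear optimization, so the maximum over it is $\max\{f_P(x),f_Q(x)\}$; and (i) follows from $\max\{(y+w)^\top x : y\in P,\,w\in Q\}=\max\{y^\top x\}+\max\{w^\top x\}$ together with the fact that $P+Q$ is exactly the set of such sums $y+w$. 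Each of these is a one-line computation.

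For bijectivity I would first record well-definedness of the codomain: $f_P$ is a finite maximum of linear functions, hence convex and \cpwl, and $f_P(\lambda x)=\lambda f_P(x)$ for $\lambda\ge 0$ is visible from the formula, so $f_P\in\hccpwl_n$. Injectivity is the substantive classical ingredient: $f_P$ recovers $P$ through $P=\{y\in\R^n : y^\top x\le f_P(x)\ \text{for all}\ x\in\R^n\}$. The inclusion $\subseteq$ is just the definition of $f_P$; for $\supseteq$, if $y\notin P$ then, $P$ being nonempty, compact, and convex, the separating hyperplane theorem yields an $x$ with $y^\top x>\max\{z^\top x : z\in P\}=f_P(x)$, so $y$ fails the defining inequality. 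Hence $f_P=f_Q$ forces $P=Q$.

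For surjectivity, take $g\in\hccpwl_n$. Being convex and \cpwl, $g$ is the pointwise maximum of finitely many affine functions, $g(x)=\max_{i\in[m]}\{a_i^\top x+b_i\}$, where each piece is attained on a full-dimensional polyhedron of the underlying complex. Positive homogeneity forces the constant terms to vanish: evaluating $g(\lambda x)=\lambda g(x)$ at a point $x$ in the interior of the region on which the $i$-th piece is attained — so that $\lambda x$ lies in the same region for $\lambda$ near $1$ — gives $\lambda a_i^\top x+b_i=\lambda(a_i^\top x+b_i)$, i.e. $(1-\lambda)b_i=0$, hence $b_i=0$. Therefore $g(x)=\max_i\{a_i^\top x\}=f_P(x)$ for $P=\conv\{a_i : i\in[m]\}$, which gives surjectivity. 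Since the whole statement is exactly the correspondence recorded in \cite{schneider2013convex}, I do not expect a genuine obstacle; the only points that need a little care are invoking the separation theorem in the right form for injectivity and the argument that positive homogeneity kills the constant terms of the affine pieces in surjectivity.
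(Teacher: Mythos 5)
Your proof is correct; the paper itself gives no proof of this statement but simply cites \cite{schneider2013convex}, and your argument is the standard one found there: the identities (i)--(iii) read off from $f_P(x)=\max\{y^\top x : y\in P\}$, injectivity via the separating hyperplane theorem recovering $P=\{y : y^\top x\le f_P(x)\ \forall x\}$, and surjectivity by writing $g\in\hccpwl_n$ as a finite maximum of affine pieces whose constant terms vanish by positive homogeneity. All steps are sound, including the two points you flag as needing care.
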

For our investigation it is essential to extend the preceding relationships to virtual polytopes and corresponding functions in $\hcpwl_n$.
Given the support functions $f_P$ and $f_Q$ of two polytopes $P$ and $Q$, the \emph{support function of the virtual polytope} $P-Q$ is defined as $f_{P-Q}\coloneqq f_P-f_Q$.
We also build on the following well-known fact.
\begin{lemma}[\cite{melzer2009expressibility}]\label{lem:cpwl_difference_convex}
Every positively homogeneous \cpwl function $f\colon\R^n\to\R$ can be written as $f=g-h$ where $g$ and $h$ are convex positively homogeneous \cpwl functions.
\end{lemma}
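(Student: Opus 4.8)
The plan is to read off the decomposition directly from the Wang--Sun representation of \Cref{thm:WangSun}. Given $f \in \hcpwl_n$, write
\[
f(x) = \sum_{i=1}^p \beta_i \max\{a_{i,1}^\top x, \ldots, a_{i,n+1}^\top x\}
\]
with $\beta_i \in \R$ and $a_{i,j} \in \R^n$, and set $m_i(x) \coloneqq \max\{a_{i,1}^\top x, \ldots, a_{i,n+1}^\top x\}$ for $i \in [p]$. Each $m_i$ lies in $\hccpwl_n$: as a pointwise maximum of finitely many linear functions it is convex and \cpwl, and it is positively homogeneous because each $a_{i,j}^\top x$ is. Hence $f$ is a real linear combination of functions in $\hccpwl_n$.

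It remains to absorb the signs of the coefficients. Partition the index set into $I^{+} \coloneqq \{\, i \in [p] : \beta_i \ge 0 \,\}$ and $I^{-} \coloneqq \{\, i \in [p] : \beta_i < 0 \,\}$, and put
\[
g \coloneqq \sum_{i \in I^{+}} \beta_i\, m_i, \qquad h \coloneqq \sum_{i \in I^{-}} (-\beta_i)\, m_i ,
\]
interpreting an empty sum as the zero function (which lies in $\hccpwl_n$). Both $g$ and $h$ are nonnegative linear combinations of elements of $\hccpwl_n$. Nonnegative scaling and addition preserve convexity and positive homogeneity; they also preserve the \cpwl property, since any finitely many \cpwl functions are affine on the full-dimensional cells of the common refinement of their underlying polyhedral complexes, and hence so is any linear combination of them. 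Therefore $g, h \in \hccpwl_n$, and by construction $f = g - h$, as claimed.

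I do not expect a real obstacle here: all the content sits in \Cref{thm:WangSun}, and the rest is bookkeeping on the closure of $\hccpwl_n$ under nonnegative linear combinations. If one preferred to avoid invoking Wang--Sun, an alternative is an induction over the finitely many hyperplanes through the origin carrying the walls of the complete fan underlying $f$, convexifying one wall at a time; I would nonetheless present the Wang--Sun argument as the main proof because it is shorter and more transparent.
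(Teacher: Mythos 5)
Your argument is mathematically correct: each $m_i$ is convex, positively homogeneous, and \cpwl; nonnegative combinations preserve all three properties (the \cpwl part via the common refinement, as you say); and the sign-splitting gives $f=g-h$. Note, however, that the paper does not prove this lemma at all --- it is quoted as a known fact from \citet{melzer2009expressibility} --- so there is no in-paper proof to match, and your route differs substantially from the classical one. The standard direct argument (essentially Melzer's) convexifies $f$ by hand: if $\ell_1,\ldots,\ell_k$ are the affine pieces of $f$, one shows that $f+N\sum_{i<j}\max\{\ell_i,\ell_j\}$ is convex for $N$ large enough, because each non-convex crease of $f$ lies in some hyperplane $\{\ell_i=\ell_j\}$ where the added term has a convex kink; then $f=\bigl(f+N\textstyle\sum_{i<j}\max\{\ell_i,\ell_j\}\bigr)-N\sum_{i<j}\max\{\ell_i,\ell_j\}$ is the desired decomposition. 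Your derivation instead deduces the lemma from \Cref{thm:WangSun}, which is a strictly stronger and much harder theorem (its content is the bound of $n+1$ terms per maximum; the mere existence of \emph{some} signed-sum-of-maxes representation is already equivalent to the DC decomposition). Within this paper that is formally unobjectionable, since both results are imported as black boxes and \Cref{thm:WangSun} is stated first; but it is logically top-heavy, and in a self-contained development it would risk circularity, because proofs of the Wang--Sun theorem typically pass through a DC decomposition of exactly this kind. What your approach buys is brevity; what the direct approach buys is an elementary, independent proof of the easy fact. I would keep your write-up only if you are content to treat \Cref{thm:WangSun} as axiomatic, and otherwise promote your sketched alternative (convexifying wall by wall) to the main argument.
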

The following Lemma is a direct application of \Cref{thm:duality} and \Cref{lem:cpwl_difference_convex} and the fact that virtual polytopes are constructed as the Grothdendieck group of polyoptes \citep{panina2015virtual}.
\begin{lemma}\label{lem:iso_cpwl_virtual}
The map $\vPolytopes_n\to \hcpwl_n, P-Q\mapsto f_{P-Q}$ is a bijection and satisfies
\begin{enumerate}
    \item $f_{(P-Q)+(R-S)} = f_{P-Q} + f_{R-S}$, \label{lem:iso_cpwl_virtual:i}
    \item $f_{\lambda(P-Q)} = \lambda f_{P-Q}$, \label{lem:iso_cpwl_virtual:ii}
    \item $f_{\conv(P \cup Q)} = \max\{f_P, f_Q\}$. \label{lem:iso_cpwl_virtual:iii}
\end{enumerate}
\begin{proof}
To check that the given map is well-defined, note first that support functions are positively homogeneous and \cpwl.
So $f_{P-Q}= f_P-f_Q$ is also positively homogeneous and \cpwl, for any polytopes $P$ and $Q$.
We yet have to show that $f_{P-Q}$ is independent of the representation of $P-Q$.
Given equivalent virtual polytopes $P-Q$ and $R-S$, we obtain $P+S=R+Q$ and hence $f_{P-Q}(x)=f_P(x)+f_S(x) = f_R(x)+f_Q(x)=f_{R-S}(x)$.

To see that the map $P-Q \mapsto f_P(x)-f_Q(x) $ is injective, let $f_{P-Q}(x)=f_{R-S}(x)$ for all~$x \in \R^n$.
Rearranging gives $f_P(x)+f_S(x) = f_R(x)+f_Q(x)$.
Since the support function uniquely determines a polytope, it follows that $P+S = R+Q$.
Thus, the virtual polytopes $P-Q$ and $R-S$ are the same, as is to be shown.

To verify that the map $P-Q \mapsto f_P(x)-f_Q(x)$ is surjective, consider an arbitrary positively homogeneous \cpwl function $f(x)$ on $\R^n$.
By Lemma~\ref{lem:cpwl_difference_convex}, we can assume that $f$ is of the form $\max\{a_j^\top x: j\in[p]\} - \max\{b_k^\top x : k \in [q]\}$, for finitely many points $a_j$, $j\in [p]$, and $b_k$, $k \in [q]$.
Defining the polytopes $P = \conv\{a_j : j\in [p]\}$ and $Q = \conv\{b_k : k\in [q]\}$, we obtain $f(x) = f_P(x)-f_Q(x)$ and hence every positively homogeneous \cpwl function is the image of the map, establishing surjectivity.

To verify Relation \ref{lem:iso_cpwl_virtual:i}, consider two arbitrary virtual polytopes $P-Q$ and $R-S$ and check that
\begin{align*}
    f_{(P-Q)+(R-S)} &= f_{(P+R)-(Q+S)} = f_{P+R} - f_{Q+S} = (f_P + f_R) - (f_Q + f_S) \\ &= (f_P - f_Q) + (f_R - f_S) = f_{P - Q} + f_{R - S}.
\end{align*}%
For Relation \ref{lem:iso_cpwl_virtual:ii}, take an any virtual polytope $P-Q$ and $\lambda\in \R$.
There holds
\begin{align*}
    f_{\lambda (P - Q)} &= f_{\lambda P - \lambda Q} = f_{\lambda P} - f_{\lambda Q} = \lambda f_{P} - \lambda f_{Q} = \lambda(f_{P}-f_{Q})= \lambda f_{P-Q}.
\end{align*}%
For \ref{lem:iso_cpwl_virtual:iii}, consider arbitrary virtual polytopes $P-Q$ and $R-S$. We find that
\begin{align*}
    f_{\conv(P-Q,@R-S)}
    &= f_{\conv((P+S)\cup (R+Q))-(Q+S)} \\
    &= f_{\conv((P+S)\cup (R+Q))} - f_{Q+S} \\
    &= \max\{f_{P+S},f_{R+Q}\} - f_{Q+S} \\
    &= \max\{f_P+f_S,f_R+f_Q\} - (f_Q + f_S) \\
    &= \max\{f_P+f_S - (f_Q + f_S),f_R+f_Q - (f_Q + f_S)\} \\
    &= \max\{f_P - f_Q, f_R - f_S\} \\
    &= \max\{f_{P-Q}, f_{R-S}\}.\qedhere
\end{align*}
\end{proof}
\end{lemma}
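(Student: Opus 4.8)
The plan is to push everything through the duality of \Cref{thm:duality}, which already settles the case of honest polytopes, and then to extend formally using that $\vPolytopes_n$ is, by construction, the Grothendieck group of $(\Polytopes_n,+)$.

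First I would check that the map is well-defined. If $(P,Q)$ and $(R,S)$ represent the same virtual polytope, then by definition of $\sim$ we have $P+S=R+Q$, so \Cref{thm:duality}(i) yields $f_P+f_S=f_R+f_Q$, hence $f_P-f_Q=f_R-f_S$; thus $f_{P-Q}$ depends only on the class $P-Q$, and it is positively homogeneous and \cpwl because support functions are and these properties survive taking differences. Injectivity is literally this computation read in reverse: from $f_P-f_Q=f_R-f_S$ rearrange to $f_{P+S}=f_{R+Q}$ via \Cref{thm:duality}(i), and since $P\mapsto f_P$ is injective on $\Polytopes_n$ conclude $P+S=R+Q$, i.e. $P-Q=R-S$. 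For surjectivity I would invoke \Cref{lem:cpwl_difference_convex}: an arbitrary $f\in\hcpwl_n$ equals $g-h$ with $g,h\in\hccpwl_n$, and by \Cref{thm:duality} these are support functions $g=f_P$, $h=f_Q$ of suitable polytopes $P,Q$, so $f=f_{P-Q}$ lies in the image.

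For the three relations I would substitute the definitions of the virtual-polytope operations and invoke the matching part of \Cref{thm:duality} together with the additivity $f_{P-Q}=f_P-f_Q$. Relations (i) and (ii) follow from parts (i) and (ii) of \Cref{thm:duality} by a one-line rearrangement. Relation (iii) for virtual polytopes takes a couple more steps: by definition $\conv(P_1-Q_1,P_2-Q_2)=\conv\big((P_1+Q_2)\cup(P_2+Q_1)\big)-(Q_1+Q_2)$, so \Cref{thm:duality}(iii) and (i) give its support function as $\max\{f_{P_1}+f_{Q_2},\,f_{P_2}+f_{Q_1}\}-(f_{Q_1}+f_{Q_2})$, and pulling the constant $f_{Q_1}+f_{Q_2}$ inside the maximum produces $\max\{f_{P_1-Q_1},f_{P_2-Q_2}\}$.

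I do not expect a genuine obstacle: the two substantive inputs — the polytope-level duality \Cref{thm:duality} and the convex-difference decomposition \Cref{lem:cpwl_difference_convex} — are already in hand, and the universal property of the Grothendieck group is exactly what makes the formal extension automatic, with the cancellation law ensuring the construction of $\vPolytopes_n$ is sound. The only point needing a word of care is that the operations $\lambda(\cdot)$, $+$, and $\conv$ on $\vPolytopes_n$ are themselves well-defined on equivalence classes before one verifies the relations; since this is part of the cited construction of $\vPolytopes_n$, I would note it rather than reprove it.
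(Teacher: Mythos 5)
Your proposal is correct and follows essentially the same route as the paper: well-definedness and injectivity via the cancellation/equivalence relation and Theorem~\ref{thm:duality}(i), surjectivity via Lemma~\ref{lem:cpwl_difference_convex}, and the three relations by unfolding the virtual-polytope operations and applying the corresponding parts of Theorem~\ref{thm:duality}. The only cosmetic difference is that the paper writes out the convex decomposition as explicit max-terms before forming the Newton polytopes, whereas you invoke the bijection of Theorem~\ref{thm:duality} directly; this changes nothing of substance.
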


Having established a general duality between virtual polytopes and corresponding functions in $\hcpwl_n$, our next goal is to focus on those virtual polytopes which correspond to the functions computable by sparse maxout networks.

\begin{definition}\label{def:dualpolytopes}
Let be given the parameters $\ell \in\N_0$, $\dvec=(d_1=n,d_2,\ldots,d_\ell)\in\N^\ell$, and $\rvec=(r_1,\ldots,r_\ell)\in\N^\ell$.
We define recursively
\begin{align*}
    \dualpolytopesl{n}{0,\dvec,\rvec} &\coloneqq \{\{v\}:v\in\R^n\}, \\
    \dualpolytopesl{n}{\ell,\dvec,\rvec} &\coloneqq \big\{\conv\left\{\textstyle\sum_{j=1}^{d_{\ell}}\alpha_{ij}V_j : i\in[r_\ell]\right\}\;:~\alpha_{ij}\in\R,~V_j\in \dualpolytopesl{n}{\ell-1,\dvec,\rvec} \big\}.\\
\intertext{Furthermore, let us denote}
    \dualpolytopes{n}{\ell,\dvec,\rvec} &\coloneqq \big\{ \textstyle\sum_{i=1}^k \alpha_i V_i~:~k\in\N,~\alpha_i\in\R,~V_i \in \dualpolytopesl{n}{\ell,\dvec,\rvec}\big\},\\
    \dualpolytopesl{}{\ell,\dvec,\rvec} &\coloneqq  \textstyle\bigcup_{n\in\N} \dualpolytopesl{n}{\ell,\dvec,\rvec}\text{, and}\\
    \dualpolytopes{}{\ell,\dvec,\rvec} &\coloneqq  \textstyle\bigcup_{n\in\N} \dualpolytopes{n}{\ell,\dvec,\rvec}.
\end{align*}
\end{definition}

In \cite{hertrich2021towards}, it is shown that the above recursive definition of virtual polytopes precisely corresponds to the Newton polytopes of the function computed by networks with $\ell$ layers.  The following theorem, carrying this to the indegree-constrained setting, is an immediate consequence
\begin{theorem}\label{thm:iso_sparse_virtual}
The map $V\mapsto f_{V}$ is a bijection between $\dualpolytopesl{n}{\ell,\dvec,\rvec}$ and $\networksl{n}{\ell,\dvec,\rvec}$ and between $\dualpolytopes{n}{\ell,\dvec,\rvec}$ and $\networks{n}{\ell,\dvec,\rvec}$.
\begin{proof}
Note that $\dualpolytopesl{n}{\ell,\dvec,\rvec}\subseteq\dualpolytopes{n}{\ell,\dvec,\rvec}\subseteq\vPolytopes_n$.
Likewise, $\networksl{n}{\ell,\dvec,\rvec}\subseteq\networks{n}{\ell,\dvec,\rvec}\subseteq\hcpwl_n$.
Since Lemma~\ref{lem:iso_cpwl_virtual} says that $V\mapsto f_{V}$ is a bijection between $\vPolytopes_n$ and $\hcpwl_n$, we certainly know that its restrictions to $\dualpolytopesl{n}{\ell,\dvec,\rvec}$ or $\dualpolytopes{n}{\ell,\dvec,\rvec}$ are injective.
It remains to be shown that the image of $\dualpolytopesl{n}{\ell,\dvec,\rvec}$ is $\networksl{n}{\ell,\dvec,\rvec}$.
From there, Lemma~\ref{lem:iso_cpwl_virtual}, Part~\ref{lem:iso_cpwl_virtual:i}, gives directly that $\networks{n}{\ell,\dvec,\rvec}$ is the image of $\dualpolytopes{n}{\ell,\dvec,\rvec}$. The point is that the functions in $\networks{n}{\ell,\dvec,\rvec}$ are exactly the linear combinations of those in $\networksl{n}{\ell,\dvec,\rvec}$, just as the virtual polytopes in $\dualpolytopes{n}{\ell,\dvec,\rvec}$ are precisely the linear combinations of those in $\dualpolytopesl{n}{\ell,\dvec,\rvec}$.

Let us begin with the case where $\ell=0$.
By definition, $\dualpolytopesl{n}{0,\dvec,\rvec}=\{\{v\}:v\in\R^n\}$. The corresponding support functions are the linear functions from $\R^n$ to $\R$, which are precisely the functions in $\networksl{n}{0,\dvec,\rvec}$.

We proceed inductively, assuming that the image of $\dualpolytopesl{n}{\ell-1,\dvec,\rvec})$ is $\networksl{n}{\ell-1,\dvec,\rvec}$.
Any $V\in \dualpolytopesl{n}{\ell,\dvec,\rvec}$, by Definition~\ref{def:dualpolytopes}, is of the form
\begin{equation*}
    V=\conv\left\{\textstyle\sum_{j=1}^{d_{\ell}}\alpha_{ij}V_j : i\in[r_\ell]\right\}
\end{equation*}
for certain $\alpha_{ij}\in\R$ and $V_j\in \dualpolytopesl{n}{\ell-1,\dvec,\rvec}$.
By Lemma~\ref{lem:iso_cpwl_virtual}, Parts~\ref{lem:iso_cpwl_virtual:i} to \ref{lem:iso_cpwl_virtual:iii},
\begin{equation*}
    f_V(x)=\max\limits_{i\in[r_\ell]}\sum\limits_{j=1}^{d_\ell} \alpha_{ij}f_{V_j}(x).
\end{equation*}
This is precisely what is computed at a neuron of a rank-$r_\ell$-maxout layer that is indegree-$d_\ell$-constrained.
Since, by our induction hypothesis, each $f_{V_j}$ is realized by a network from $\networksl{n}{\ell-1,\dvec,\rvec}$, it follows that $f_V$ is in $\networksl{n}{\ell,\dvec,\rvec}$.
Conversely, every function in $\networksl{n}{\ell,\dvec,\rvec}$ is of the same form as $f_V$.
With the $f_{V_j}$ lying in $\networksl{n}{\ell-1,\dvec,\rvec}$, we know by induction that the corresponding $V_j$ belong to $\dualpolytopesl{n}{\ell-1,\dvec,\rvec}$.
These serve to construct $V=\conv\left\{\textstyle\sum_{j=1}^{d_{\ell}}\alpha_{ij}V_j : i\in[r_\ell]\right\}$, which shows that every function in $\networksl{n}{\ell,\dvec,\rvec}$ is indeed the image of a suitable virtual polytope in $\dualpolytopesl{n}{\ell,\dvec,\rvec}$.
\end{proof}
\end{theorem}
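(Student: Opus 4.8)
The plan is to bootstrap from \Cref{lem:iso_cpwl_virtual}, which already gives that $V\mapsto f_V$ is a bijection $\vPolytopes_n\to\hcpwl_n$ intertwining Minkowski sum, scaling, and convex hull of virtual polytopes with pointwise sum, scaling, and maximum of functions. Since $\dualpolytopesl{n}{\ell,\dvec,\rvec}\subseteq\vPolytopes_n$ and, being a composition of positively homogeneous \cpwl maps, every function in $\networksl{n}{\ell,\dvec,\rvec}$ lies in $\hcpwl_n$, injectivity of the restricted map is free. So the real task is the set equality $\{f_V:V\in\dualpolytopesl{n}{\ell,\dvec,\rvec}\}=\networksl{n}{\ell,\dvec,\rvec}$. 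Granting this, the claim for $\dualpolytopes{n}{\ell,\dvec,\rvec}$ and $\networks{n}{\ell,\dvec,\rvec}$ follows from parts (i)--(ii) of \Cref{lem:iso_cpwl_virtual}: each of these two classes is exactly the set of finite $\R$-linear combinations of elements of its "checked" counterpart, and the bijection sends linear combinations to linear combinations, so surjectivity and injectivity transfer verbatim.

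I would establish the image identity by induction on $\ell$. The base case $\ell=0$ is immediate: $\dualpolytopesl{n}{0,\dvec,\rvec}$ consists of singletons $\{v\}$, and $f_{\{v\}}(x)=v^\top x$ ranges over exactly the linear functions $\R^n\to\R$, which is $\networksl{n}{0,\dvec,\rvec}$. For the step, suppose the image of $\dualpolytopesl{n}{\ell-1,\dvec,\rvec}$ is $\networksl{n}{\ell-1,\dvec,\rvec}$. Unwinding \Cref{def:dualpolytopes}, a generic $V\in\dualpolytopesl{n}{\ell,\dvec,\rvec}$ has the form $\conv\{\sum_{j=1}^{d_\ell}\alpha_{ij}V_j:i\in[r_\ell]\}$ with $\alpha_{ij}\in\R$ and $V_j\in\dualpolytopesl{n}{\ell-1,\dvec,\rvec}$; applying parts (i)--(iii) of \Cref{lem:iso_cpwl_virtual} term by term yields $f_V(x)=\max_{i\in[r_\ell]}\sum_{j=1}^{d_\ell}\alpha_{ij}f_{V_j}(x)$. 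By the induction hypothesis each $f_{V_j}$ is computed by a network in $\networksl{n}{\ell-1,\dvec,\rvec}$, and placing these (at most $d_\ell$) subnetworks in parallel and feeding their outputs into a single neuron that reads off at most $d_\ell$ of them and outputs the maximum of $r_\ell$ affine combinations — exactly the computation of a neuron in an indegree-$d_\ell$-constrained rank-$r_\ell$-maxout layer — shows $f_V\in\networksl{n}{\ell,\dvec,\rvec}$. Conversely, by the definition of the architecture every $f\in\networksl{n}{\ell,\dvec,\rvec}$ is of the form $\max_{i\in[r_\ell]}\sum_{j=1}^{d_\ell}\alpha_{ij}g_j$ with each $g_j\in\networksl{n}{\ell-1,\dvec,\rvec}$; writing $g_j=f_{V_j}$ with $V_j\in\dualpolytopesl{n}{\ell-1,\dvec,\rvec}$ (induction hypothesis) and setting $V\coloneqq\conv\{\sum_{j=1}^{d_\ell}\alpha_{ij}V_j:i\in[r_\ell]\}$, the same computation shows $f_V=f$, and $V\in\dualpolytopesl{n}{\ell,\dvec,\rvec}$ by construction. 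This closes the induction.

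The step that calls for the most care is matching the combinatorial constraints on the two sides: one must confirm that "indegree $d_\ell$" in the network — the bound $|\bigcup_{j}\supp(a_{ij})|\le d_\ell$ on the weights feeding an output neuron — is precisely what corresponds to the $d_\ell$-fold sum $\sum_{j=1}^{d_\ell}\alpha_{ij}V_j$ in the dual object, with the arbitrary (unconstrained) widths of the hidden layers absorbed into the parallel stacking of subnetworks rather than into the ambient dimension, which stays $\R^n$ because the support function always lives on the input space. A secondary point to verify in passing is that the three operations on virtual polytopes invoked above are exactly those for which \Cref{lem:iso_cpwl_virtual} was proven, so that no identities beyond that lemma and the recursion of \Cref{def:dualpolytopes} are needed. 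With these checks in place, the argument is the indegree-constrained transcription of the correspondence in \citet{hertrich2021towards}, as announced before the statement.
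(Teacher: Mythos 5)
Your proposal is correct and follows essentially the same route as the paper's own proof: injectivity is inherited from the restriction of the bijection in \Cref{lem:iso_cpwl_virtual}, the unchecked classes are handled as linear combinations of the checked ones, and the image identity is proved by induction on $\ell$ using the recursion of \Cref{def:dualpolytopes} together with parts (i)--(iii) of that lemma to match $\conv\{\sum_j\alpha_{ij}V_j\}$ with a maxout neuron's computation. Your explicit flagging of the correspondence between the indegree bound and the $d_\ell$-fold sum is a point the paper leaves implicit, but it introduces no substantive difference.
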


Note that the relations \ref{lem:iso_cpwl_virtual:i} to \ref{lem:iso_cpwl_virtual:iii} of Lemma~\ref{lem:iso_cpwl_virtual} remain valid when restricting the map $V\mapsto f_V$.
In particular, this map is not only a bijection between $\networksl{n}{\ell,\dvec,\rvec}$ and $\dualpolytopesl{n}{\ell,\dvec,\rvec}$, as stated in Theorem~\ref{thm:iso_sparse_virtual}, but also a vector space isomorphism and a semigroup isomorphism, the latter with respect to building the convex hull and taking the maximum.

\section{The dimension of polytopes arising from sparse networks}\label{sec:dimension}

Having discussed which virtual polytopes are dual to functions computable by networks in $\networksl{n}{\ell,\dvec,\rvec}$, we now aim to bound the dimension of these polytopes.
Eventually, this will link the geometry of the polytopes to the expressivity of corresponding networks.

\begin{definition}
    The \emph{dimension} of a virtual polytope $V\in\vPolytopes$ is defined by \[\dim(V) = \min \{\dim(P+Q) \mid V=P-Q\}.\]
\end{definition}

The proof of this section's main result, Theorem~\ref{thm:dim_bound}, builds on the following two lemmas.

\begin{lemma}\label{lem:affhull_subspaces}
For subspaces $U_1,\ldots,U_k\subseteq\R^n$ and vectors $x_1,\ldots,x_k\in\R^n$, there holds
\begin{equation*}
    \aff\big(\textstyle\bigcup_{i=1}^k (x_i+U_i)\big) = x_1 + \spn\big\{x_2-x_1,\ldots,x_k-x_1\big\} + \textstyle\sum_{i=1}^k \spn(U_i).
\end{equation*}
\begin{proof}
By the definition of the affine hull,    
\begin{equation*}
    \aff\big(\textstyle\bigcup_{i=1}^k (x_i+U_i)\big) = x_1 + \spn\big\{s-x_1:s\in\textstyle\bigcup_{i=1}^k (x_i+U_i)\big\}.
\end{equation*}
If $s\in x_1+U_1$, then $s=x_1+u_1$ for any $u_1\in U_1$ and $s-x_1=u_1$.
For $i\in\{2,\ldots,k\}$, if $s\in x_i+U_i$, then $s=x_i+u_i$ for some $u_i\in U_i$ and 
\begin{equation*}
    s-x_1=(x_i-x_1)+u_i.
\end{equation*}
Therefore,
\begin{align*}
    \{s-x_1:s\in\textstyle\bigcup_{i=1}^k (x_i+U_i)\} &= U_1\cup \textstyle\bigcup_{i=2}^k \{(x_i-x_1)+u_i: u_i\in U_i\},
\end{align*}
which implies our statement, because
\begin{align*}
    \aff\big(\textstyle\bigcup_{i=1}^k (x_i+U_i)\big)
    &= x_1 + \spn\bigl(U_1\cup \textstyle\bigcup_{i=2}^k \{(x_i-x_1)+u_i: u_i\in U_i\}\big) \\
    &= x_1 + \spn\bigl(U_1\cup \big\{x_i-x_1: i\in\{2,\ldots,k\}\big\} + \textstyle\sum_{i=2}^k U_i\bigr) \\
    &= x_1 + \spn\big\{x_i-x_1: i\in\{2,\ldots,k\}\big\} + \textstyle\sum_{i=1}^k U_i.\qedhere
\end{align*}
\end{proof}
\end{lemma}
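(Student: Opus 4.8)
The plan is to reduce the claim to the standard description of an affine hull as a fixed base point plus the linear span of the difference vectors, and then to identify the two spans by a short double-inclusion argument. First I would note that, since each $U_i$ is a subspace, $0\in U_i$, so in particular $x_1\in x_1+U_1\subseteq\bigcup_{i=1}^k(x_i+U_i)$; this legitimizes taking $x_1$ as the distinguished base point and writing
\begin{equation*}
\aff\big(\textstyle\bigcup_{i=1}^k(x_i+U_i)\big)=x_1+\spn\big\{s-x_1:s\in\textstyle\bigcup_{i=1}^k(x_i+U_i)\big\}.
\end{equation*}
I would also record that $\spn(U_i)=U_i$ because $U_i$ is already a subspace, so the asserted right-hand side is $x_1+\spn\{x_2-x_1,\ldots,x_k-x_1\}+\sum_{i=1}^kU_i$; hence it suffices to match the two $\spn$'s and then add $x_1$ to both sides.

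Next I would split off the index $i=1$. For $s\in x_1+U_1$ one has $s-x_1\in U_1$, and every element of $U_1$ arises this way, so this block contributes exactly $U_1$ to the difference set. For $i\in\{2,\ldots,k\}$, writing $s=x_i+u_i$ with $u_i\in U_i$ gives $s-x_1=(x_i-x_1)+u_i$. Thus the set of differences is $U_1\cup\bigcup_{i=2}^k\{(x_i-x_1)+u_i:u_i\in U_i\}$, and the remaining task is the purely linear-algebraic identity
\begin{equation*}
\spn\Big(U_1\cup\textstyle\bigcup_{i=2}^k\{(x_i-x_1)+u_i:u_i\in U_i\}\Big)=\spn\{x_2-x_1,\ldots,x_k-x_1\}+\textstyle\sum_{i=1}^kU_i.
\end{equation*}

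For this I would argue by two inclusions. For $\subseteq$, every generator on the left lies in the subspace on the right: $U_1\subseteq\sum_iU_i$, and each $(x_i-x_1)+u_i$ is the sum of $x_i-x_1\in\spn\{x_2-x_1,\ldots\}$ and $u_i\in U_i$. For $\supseteq$, choosing $u_i=0$ shows each $x_i-x_1$ $(i\ge2)$ is itself a generator, hence lies in the left-hand span; consequently, for arbitrary $u_i\in U_i$ with $i\ge2$ we recover $u_i=[(x_i-x_1)+u_i]-(x_i-x_1)$ as a difference of two members of the left-hand span, so $U_i$ is contained in it for every $i$ (the case $i=1$ being immediate since $U_1$ is among the generators). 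Combining the inclusions, the two spans agree, and adding $x_1$ yields the lemma.

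I do not expect a genuine obstacle: this is a bookkeeping exercise. The only two spots that require a little care are checking that $x_1$ actually belongs to the set whose affine hull is taken — which is precisely where $0\in U_1$ enters — and the step that reconstructs each $U_i$ from the shifted family $\{(x_i-x_1)+u_i\}$ by subtracting the fixed representative $x_i-x_1$.
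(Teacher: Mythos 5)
Your proposal is correct and follows essentially the same route as the paper's proof: base the affine hull at $x_1$, split the difference set into $U_1$ and the shifted blocks $\{(x_i-x_1)+u_i\}$, and identify the resulting span with $\spn\{x_2-x_1,\ldots,x_k-x_1\}+\sum_i U_i$. The only difference is that you spell out the final span identification via a double inclusion (and explicitly check $x_1$ lies in the union), where the paper compresses this into a chain of equalities.
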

\begin{lemma}\label{lem:dimension_bound_vpolytopes}
For integers $d,r\in\N$ and virtual polytopes $V_i\in \vPolytopes$, $i\in\{1,\ldots,d\}$, there holds
\begin{equation*}
    \dim\big(\conv\big(\textstyle\sum_{i=1}^d\alpha_{1i}V_i,\ldots,\textstyle\sum_{i=1}^d\alpha_{ri}V_i \big)\big)\leq \big(\textstyle\sum_{i=1}^d\dim V_i\big) + r-1,
\end{equation*}
for any coefficients $\alpha_{1i},\ldots,\alpha_{ri}\in\R$.
\begin{proof}
For $i\in\{1,\ldots,d\}$, let $V_i = P_i-Q_i$ be represented by $P_i\subseteq x_i+U_i$ and $Q_i\subseteq y_i+W_i$ such that $\dim(V_i)=\dim(P_i+Q_i)=\dim(U_i+W_i)$.
For $j\in\{1,\ldots,r\}$, we use the shorthands
\begin{align*}
    P^j&\coloneqq \textstyle\sum_{i=1}^d\alpha_{ji}P_i, & Q^j&\coloneqq \textstyle\sum_{i=1}^d\alpha_{ji}Q_i, \\
    x^j&\coloneqq \textstyle\sum_{i=1}^d\alpha_{ji}x_i, & y^j&\coloneqq \textstyle\sum_{i=1}^d\alpha_{ji}y_i, & z^j\coloneqq x^j+\textstyle\sum_{k=1,k\neq j}^r y^k \\
    U&\coloneqq \textstyle\sum_{i=1}^d U_i, & W&\coloneqq \textstyle\sum_{i=1}^dW_i.
\end{align*}
and observe that
\begin{equation*}
    P^j\subseteq x^j+U\quad\text{and}\quad Q^j\subseteq y^j+W.
\end{equation*}
Therefore,
\begin{align*}
    &\mathrel{\phantom{=}}\dim\big(\conv\big(\textstyle\sum_{ i=1}^d\alpha_{1i}V_i,\ldots,\textstyle\sum_{i=1}^d\alpha_{ri}V_i \big)\big)\\
    &=\dim\big(\conv\big(\textstyle\sum_{i=1}^d\alpha_{1i}(P_i-Q_i),\ldots,\textstyle\sum_{i=1}^d\alpha_{ri}(P_i-Q_i) \big)\big) \\
    &=\dim\big(\conv\big(P^1-Q^1,\ldots,P^r-Q^r \big)\big) \\
    &=\dim\big(\conv\big(P^1+\textstyle\sum_{j=2}^r Q^j\cup\ldots\cup P^r+\sum_{j=1}^{r-1}Q^j \big) - \big(\textstyle\sum_{j=1}^r Q^j\big)\big) \\
    &\leq\dim\big(\conv\big(x^1+\textstyle\sum_{j=2}^r y^j +U+W\cup\ldots\cup x^r+\sum_{j=1}^{r-1}y^j+U+W \big) \hspace{-0.2mm}+\hspace{-0.2mm} \big(\textstyle\sum_{j=1}^r y^j+W\big)\big) \\
    &=\dim\big(\aff\big(z^1+U+W\cup\ldots\cup z^r+U+W \big)\big) \\[-0.92mm]
    &\overset{(\hspace{-0.25mm}*\hspace{-0.25mm})}{=}\dim\big(z^1+\spn\{z^2-z^1,\ldots,z^r-z^1\} + U + W \big) \\
    &\leq\dim(U+W) + r-1 =\dim(\textstyle\sum_{i=1}^d U_i + \textstyle\sum_{i=1}^d W_i) + r-1 \\
    &\leq\big(\textstyle\sum_{i=1}^d\dim(U_i + W_i)\big) + r-1 =\big(\textstyle\sum_{i=1}^d\dim(V_i)\big) + r-1,
\end{align*}
using Lemma~\ref{lem:affhull_subspaces} at $(\hspace{-0.25mm}*\hspace{-0.25mm})$.
\end{proof}
\end{lemma}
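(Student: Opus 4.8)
The plan is to derive the bound from a single application of \Cref{lem:affhull_subspaces}, after fixing dimension-optimal representatives. First, for each $i\in\{1,\ldots,d\}$ I would choose a representation $V_i=P_i-Q_i$ attaining the defining minimum, so $\dim(V_i)=\dim(P_i+Q_i)$. Picking base points $x_i\in P_i$, $y_i\in Q_i$ and setting $U_i\coloneqq\spn(P_i-x_i)$ and $W_i\coloneqq\spn(Q_i-y_i)$, one has $P_i\subseteq x_i+U_i$ and $Q_i\subseteq y_i+W_i$; moreover $\aff(P_i+Q_i)=(x_i+y_i)+(U_i+W_i)$, since $\spn$ of the Minkowski sum $(P_i-x_i)+(Q_i-y_i)$ equals $U_i+W_i$. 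Hence $\dim(U_i+W_i)=\dim(P_i+Q_i)=\dim(V_i)$, and this is the only place the definition of the dimension of a virtual polytope enters.

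Second, I would push the coefficients $\alpha_{ji}$ through this setup. Scaling a set by a real number can only shrink (never enlarge) its span, so $\alpha_{ji}P_i\subseteq\alpha_{ji}x_i+U_i$, and therefore $P^j\coloneqq\sum_{i=1}^d\alpha_{ji}P_i$ lies in $x^j+U$, where $x^j\coloneqq\sum_i\alpha_{ji}x_i$ and $U\coloneqq\sum_{i=1}^d U_i$; symmetrically $Q^j\coloneqq\sum_i\alpha_{ji}Q_i\subseteq y^j+W$ with $W\coloneqq\sum_{i=1}^d W_i$. The key point is that the ambient subspaces $U$ and $W$ do not depend on $j$.

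Third, I would unwind the virtual convex hull. Unfolding the definition of $\conv$ for virtual polytopes gives $\conv(P^1-Q^1,\ldots,P^r-Q^r)=\conv\big(\bigcup_{j=1}^r(P^j+\sum_{k\neq j}Q^k)\big)-\sum_{j=1}^r Q^j$. By the definition of dimension, $\dim$ of the left-hand side is at most $\dim$ of the Minkowski sum of these two representing polytopes. Since $P^j+\sum_{k\neq j}Q^k\subseteq z^j+(U+W)$ with $z^j\coloneqq x^j+\sum_{k\neq j}y^k$, while $\sum_j Q^j$ lies in a translate of $W\subseteq U+W$, that Minkowski sum is contained in a translate of $\aff\big(\bigcup_{j=1}^r(z^j+(U+W))\big)$. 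Applying \Cref{lem:affhull_subspaces} with all subspaces equal to $U+W$ identifies this affine hull with $z^1+\spn\{z^2-z^1,\ldots,z^r-z^1\}+(U+W)$, of dimension at most $(r-1)+\dim(U+W)$. Finally $\dim(U+W)=\dim\big(\sum_{i=1}^d(U_i+W_i)\big)\leq\sum_{i=1}^d\dim(U_i+W_i)=\sum_{i=1}^d\dim(V_i)$, and adding the two estimates yields the claim.

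The subspace bookkeeping in the first two steps is routine; the step I expect to require the most care is the third one. I need the $r$-fold $\conv$ of virtual polytopes to be handled correctly, in particular that unfolding it yields the stated symmetric formula independently of the order of the arguments, and I need the passage from "this virtual polytope has small dimension" to "an honest Minkowski sum of polytopes sits in a small affine subspace" to be airtight, so that \Cref{lem:affhull_subspaces} applies verbatim. I should also confirm that subtracting $\sum_j Q^j$ and translating by $\sum_j y^j$ cannot push anything outside $U+W$, which holds because $W\subseteq U+W$ and $W+W=W$.
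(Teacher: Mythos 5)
Your proposal is correct and follows essentially the same route as the paper's proof: dimension-optimal representatives $P_i\subseteq x_i+U_i$, $Q_i\subseteq y_i+W_i$, the same shorthands $P^j,Q^j,z^j,U,W$, the unfolded $r$-fold virtual convex hull, a single application of Lemma~\ref{lem:affhull_subspaces} with all subspaces equal to $U+W$, and subadditivity of dimension over $\sum_i(U_i+W_i)$. The points you flag as delicate (the $r$-ary $\conv$ formula and bounding $\dim$ of the virtual polytope by that of one representing Minkowski sum) are handled in the paper exactly as you describe.
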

\begin{theorem}\label{thm:dim_bound}
For parameters $\ell \in\N_0$, $ \dvec=(d_1,d_2,\ldots,d_\ell)$, and $\rvec=(r_1,\ldots,r_\ell)\in\N^\ell$, any virtual polytope $V \in \dualpolytopesl{}{\ell,\dvec,\rvec}$ satisfies
\begin{equation*}
    \dim(V) \leq \textstyle\sum_{k=1}^{\ell}(r_k-1)\textstyle\prod_{i=k+1}^\ell d_i.
\end{equation*}
\begin{proof}
For $\ell=0$, both sides of the inequality are zero.
We proceed by induction on $\ell \geq 1$, assuming our claim to be true for virtual polytopes in $\dualpolytopesl{}{\ell-1,\dvec,\rvec}$.
Now, consider an arbitrary virtual polytope $V\in\dualpolytopesl{}{\ell,\dvec,\rvec}$.
It can be expressed as
\begin{equation*}
    V=\conv\big(\textstyle\sum_{j=1}^{d_{\ell}}\alpha_{1j}V_j,\ldots,\textstyle\sum_{j=1}^{d_{\ell}}\alpha_{r_\ell j}V_j\big),
\end{equation*}
for suitable virtual polytopes $V_j\in \dualpolytopesl{}{\ell-1,\dvec,\rvec}$ and coefficients $\alpha_{1j},\ldots,\alpha_{r_\ell j}\in\R$.
Invoking Lemma~\ref{lem:dimension_bound_vpolytopes}, there holds 
\begin{align*}
    \dim(V) &= \dim\big(\conv\big(\textstyle\sum_{j=1}^{d_{\ell}}\alpha_{1j}V_j,\ldots,\textstyle\sum_{j=1}^{d_{\ell}}\alpha_{r_\ell j}V_j \big)\big) \\
    &\leq \big(\textstyle\sum_{j=1}^{d_{\ell}}\dim V_j\big) + r_{\ell}-1, \\
    &\leq \big(\textstyle\sum_{j=1}^{d_{\ell}}\textstyle\sum_{k=1}^{\ell-1}(r_k-1)\textstyle\prod_{i=k+1}^{\ell-1} d_i\big) + r_{\ell}-1, \\
    &= \big(d_{\ell}\textstyle\sum_{k=1}^{\ell-1}(r_k-1)\textstyle\prod_{i=k+1}^{\ell-1} d_i\big) + r_{\ell}-1, \\
    &=\textstyle\sum_{k=1}^{\ell}(r_k-1)\textstyle\prod_{i=k+1}^\ell d_i.\qedhere
\end{align*}
\end{proof}
\end{theorem}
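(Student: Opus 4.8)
The plan is to induct on $\ell$, with the inductive step being essentially a direct application of Lemma~\ref{lem:dimension_bound_vpolytopes} combined with the recursive structure of the polytope classes in Definition~\ref{def:dualpolytopes}. The base case $\ell = 0$ is immediate: a virtual polytope in $\dualpolytopesl{}{0,\dvec,\rvec}$ is a singleton, so it has dimension $0$, and the empty sum on the right-hand side is also $0$.

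For the inductive step, I would fix $\ell \geq 1$ and assume the bound holds for all virtual polytopes in $\dualpolytopesl{}{\ell-1,\dvec,\rvec}$. An arbitrary $V \in \dualpolytopesl{n}{\ell,\dvec,\rvec}$ is, by Definition~\ref{def:dualpolytopes}, of the form $\conv\big(\sum_{j=1}^{d_\ell}\alpha_{ij}V_j : i\in[r_\ell]\big)$ for some coefficients $\alpha_{ij}\in\R$ and some $V_j \in \dualpolytopesl{n}{\ell-1,\dvec,\rvec}$ (here I am implicitly identifying the recursive convex-hull-of-virtual-polytopes operation from Definition~\ref{def:dualpolytopes} with the $\conv$ operation on virtual polytopes defined in Section~\ref{sec:polytopesandsupportfunctions}, which requires a brief remark). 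Applying Lemma~\ref{lem:dimension_bound_vpolytopes} with $d = d_\ell$, $r = r_\ell$, and the virtual polytopes $V_j$ gives
\[
    \dim(V) \leq \Big(\textstyle\sum_{j=1}^{d_\ell}\dim(V_j)\Big) + r_\ell - 1.
\]
Now each $\dim(V_j)$ is bounded by the induction hypothesis by $\sum_{k=1}^{\ell-1}(r_k-1)\prod_{i=k+1}^{\ell-1}d_i$, a quantity independent of $j$. Summing over the $d_\ell$ terms multiplies this bound by $d_\ell$, and then one checks the arithmetic identity
\[
    d_\ell\Big(\textstyle\sum_{k=1}^{\ell-1}(r_k-1)\textstyle\prod_{i=k+1}^{\ell-1}d_i\Big) + (r_\ell - 1) = \textstyle\sum_{k=1}^{\ell}(r_k-1)\textstyle\prod_{i=k+1}^{\ell}d_i,
\]
which follows because each summand for $k \leq \ell-1$ picks up the extra factor $d_\ell$ (turning $\prod_{i=k+1}^{\ell-1}d_i$ into $\prod_{i=k+1}^{\ell}d_i$) and the term $r_\ell - 1$ is exactly the $k = \ell$ summand, whose product $\prod_{i=\ell+1}^{\ell}d_i$ is empty and hence equals $1$. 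This closes the induction.

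The only genuine subtlety — and the step I would be most careful about — is the reconciliation between the two notions of "$\conv$" in play: the concrete recursive construction in Definition~\ref{def:dualpolytopes}, which builds actual point sets $\conv\{\sum_j \alpha_{ij}V_j\}$ from representatives, versus the virtual-polytope operation $\conv(\cdot,\ldots,\cdot)$ defined via Minkowski sums in Section~\ref{sec:polytopesandsupportfunctions}, which is what Lemma~\ref{lem:dimension_bound_vpolytopes} is phrased in terms of. One must verify these agree on $\dualpolytopesl{}{\ell,\dvec,\rvec}$ — equivalently, that the support function of the Definition~\ref{def:dualpolytopes} object equals $\max_i \sum_j \alpha_{ij} f_{V_j}$, which is exactly the content already established in the proof of Theorem~\ref{thm:iso_sparse_virtual} via Lemma~\ref{lem:iso_cpwl_virtual}. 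Given that bridge, the rest is routine bookkeeping with products and sums; everything hard has already been absorbed into Lemmas~\ref{lem:affhull_subspaces} and~\ref{lem:dimension_bound_vpolytopes}.
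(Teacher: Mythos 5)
Your proposal is correct and follows essentially the same route as the paper's own proof: induction on $\ell$ with the inductive step given by Lemma~\ref{lem:dimension_bound_vpolytopes} applied to the recursive form from Definition~\ref{def:dualpolytopes}, followed by the same arithmetic identity. Your explicit remark reconciling the two notions of $\conv$ is a sound observation that the paper leaves implicit, but it does not change the argument.
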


The given dimension bound is tight, as is shown next.

\begin{theorem}\label{thm:dimboundattained}
Given parameters $\ell \in\N_0$, $ \dvec=(d_1,d_2,\ldots,d_\ell)$, and $\rvec=(r_1,\ldots,r_\ell)\in\N^\ell$ with $d_i\geq r_i$ for $i\in[\ell]$, if $n\geq\textstyle\sum_{k=1}^{\ell}(r_k-1)\textstyle\prod_{i=k+1}^\ell d_i$, then there exists a polytope $P \in\dualpolytopesl{n}{\ell,\dvec,\rvec}$ such that
\begin{equation*}
    \dim(P) = \textstyle\sum_{k=1}^{\ell}(r_k-1)\textstyle\prod_{i=k+1}^\ell d_i.
\end{equation*}
\end{theorem}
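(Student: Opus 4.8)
The plan is to construct the extremal polytope $P$ recursively, mirroring the recursive definition of $\dualpolytopesl{n}{\ell,\dvec,\rvec}$, and to prove by induction on $\ell$ that one can realize a polytope whose dimension meets the bound. The base case $\ell=0$ is trivial: a single point has dimension $0$, matching the empty sum. For the inductive step, suppose we have already constructed $P_1,\ldots,P_{d_\ell}\in\dualpolytopesl{n}{\ell-1,\dvec,\rvec}$ each of dimension $D_{\ell-1}\coloneqq\sum_{k=1}^{\ell-1}(r_k-1)\prod_{i=k+1}^{\ell-1}d_i$, sitting in mutually "independent" coordinate blocks of $\R^n$ — that is, their affine hulls lie in complementary subspaces. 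Then we want to choose the coefficients $\alpha_{ij}$ so that $V=\conv\big(\sum_{j=1}^{d_\ell}\alpha_{1j}P_j,\ldots,\sum_{j=1}^{d_\ell}\alpha_{r_\ell j}P_j\big)$ has dimension exactly $d_\ell D_{\ell-1}+(r_\ell-1)$. Re-examining the chain of inequalities in the proof of Lemma~\ref{lem:dimension_bound_vpolytopes}, equality propagates provided (a) each $\sum_j\alpha_{ij}P_j$ already has full dimension $\sum_j\dim P_j=d_\ell D_{\ell-1}$ in the combined block — which happens if, say, $\alpha_{1j}\neq 0$ for all $j$, since then $\sum_j\alpha_{1j}P_j$ is a genuine Minkowski sum of full-dimensional pieces in complementary subspaces — and (b) the $r_\ell$ translate vectors $z^1,\ldots,z^{r_\ell}$ contribute $r_\ell-1$ extra independent directions outside $U+W$.

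The key idea for step (b): reserve $r_\ell-1$ fresh coordinate directions $e_1,\ldots,e_{r_\ell-1}$, disjoint from the blocks used by $P_1,\ldots,P_{d_\ell}$, and arrange the $\alpha_{ij}$ so that the "centers" of the $r_\ell$ summed bodies differ precisely along those fresh directions. Concretely, one can place each $P_j$ with its affine hull containing some chosen basepoint, and pick the coefficient matrix $(\alpha_{ij})$ so that, after summing, the $i$-th body $\sum_j\alpha_{ij}P_j$ is centered near $e_{i-1}$ (for $i\geq 2$) while the bodies themselves only span the "old" block $U+W$; then $z^i-z^1$ recovers $e_{i-1}$ up to lower-order corrections. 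Since the dimension count in Lemma~\ref{lem:affhull_subspaces} only cares about the span of $\{z^2-z^1,\ldots,z^{r_\ell}-z^1\}$ modulo $U+W$, making these $r_\ell-1$ differences linearly independent modulo $U+W$ suffices. A clean way to guarantee this is to choose $P_j$ of the form $\{p_j\}+(\text{full-dimensional body in its block})$ where the basepoints $p_j$ are picked so that $x^i=\sum_j\alpha_{ij}p_j$ realizes the desired fresh direction; the freedom in choosing both the $\alpha_{ij}$ and the $p_j$ (equivalently, translating the inductively built polytopes, which does not change their class) gives more than enough room. The hypothesis $d_i\geq r_i$ is what ensures $\sum_{i=1}^d\dim V_i\geq r-1$ is not the binding constraint — actually it guarantees the recursion does not "waste" the $+r_\ell-1$ term — and the hypothesis $n\geq D_\ell$ guarantees there is enough ambient dimension to host all the disjoint blocks plus the $r_\ell-1$ fresh directions at every level.

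I would organize the write-up as: (1) state precisely the inductive claim — there exists $P\in\dualpolytopesl{n}{\ell,\dvec,\rvec}$ with $\aff(P)$ of dimension $D_\ell$ contained in the coordinate subspace spanned by the first $D_\ell$ standard basis vectors; (2) verify the base case; (3) in the step, take $n'=D_{\ell-1}$ disjoint copies of the inductive polytope placed in $d_\ell$ disjoint blocks of size $D_{\ell-1}$, exhibit the explicit coefficient matrix and basepoints, and compute via Lemma~\ref{lem:affhull_subspaces} that the resulting $V$ has dimension exactly $D_\ell$; (4) note that $V\in\dualpolytopesl{n}{\ell,\dvec,\rvec}$ by construction and that Theorem~\ref{thm:dim_bound} gives the matching upper bound, so the dimension is exactly $D_\ell$ and the polytope is genuine (not virtual — one checks $V$ can be taken to be an honest polytope since all the $\alpha_{1j}$ can be taken positive, making the $i=1$ body an actual Minkowski sum of polytopes rather than a formal difference, and then $\conv$ of polytopes is a polytope). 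The main obstacle I anticipate is bookkeeping (3): writing down a coefficient matrix $(\alpha_{ij})\in\R^{r_\ell\times d_\ell}$ and basepoints that simultaneously (i) keep each row's summed body full-dimensional in the old block and (ii) make the $r_\ell-1$ center-differences independent in the fresh directions, while (iii) staying inside the polytope class. One slick choice: let $\alpha_{1j}=1$ for all $j$; for $i\in\{2,\ldots,r_\ell\}$ let $\alpha_{ij}=1$ for all $j$ as well but translate one of the inductively-built polytopes, say $P_1$ at level $\ell$, along $e_{i-1}$ — except that the same $P_j$ is shared across rows, so instead the genuinely clean device is to absorb the fresh directions into the $P_j$'s themselves at the previous level (enlarging the target dimension of the induction to also track these), which is why I would bake "the affine hull contains the prescribed fresh directions with the right multiplicities" into the inductive hypothesis from the start rather than bolting it on at the end.
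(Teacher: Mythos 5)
Your overall strategy --- induction on $\ell$, placing $P_1,\ldots,P_{d_\ell}$ in disjoint coordinate blocks, and extracting the extra $r_\ell-1$ dimensions from the differences of the base points $z^i$ via Lemma~\ref{lem:affhull_subspaces} --- is exactly the paper's, and you correctly locate the crux: since the same $P_j$ appears in every row $\sum_j\alpha_{ij}P_j$, you cannot translate the rows independently, so the coefficient matrix together with the placement of the $P_j$ must itself produce the $r_\ell-1$ fresh base-point offsets. But you do not resolve this crux. The concrete device is missing: the paper strengthens the induction hypothesis to ``for every prescribed translate $v$ and every index set $I$ with $|I|=m_{\ell-1}$ there is $P\in\dualpolytopesl{n}{\ell-1,\dvec,\rvec}$ with $\aff(P)=v+\spn\{e_i:i\in I\}$'', and then takes a non-uniform matrix in which row $i<r_\ell$ is $P_i+\sum_{k\neq r_\ell}P_k$ (so $P_i$ with coefficient $2$ and $P_{r_\ell}$ with coefficient $0$) while row $r_\ell$ is $\sum_k P_k$, with base points $t_k$ chosen so that $t_i-t_{r_\ell}=e_{j_i}$. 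This is also where $d_\ell\geq r_\ell$ actually enters (one needs at least $r_\ell$ distinct summands $P_k$ to play off against each other), not for the reason you give. Exhibiting such a matrix is the entire content of the inductive step, so leaving it at ``more than enough room'' is a genuine gap rather than bookkeeping.

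Moreover, both fixes you gesture at are problematic. First, ``translating the inductively built polytopes does not change their class'' is not immediate from Definition~\ref{def:dualpolytopes}: a translation by $s$ of $\conv\{\sum_j\alpha_{ij}V_j : i\in[r_\ell]\}$ would have to be absorbed as $s=\sum_j\alpha_{ij}s_j$ simultaneously for all rows $i$, a nontrivial solvability condition; the paper sidesteps this by carrying the translate $v$ inside the induction hypothesis rather than translating after the fact. Second, ``absorbing the fresh directions into the $P_j$'s themselves at the previous level, enlarging the target dimension of the induction'' cannot work as stated: a level-$(\ell-1)$ polytope whose affine hull contains additional directions would have dimension strictly greater than $m_{\ell-1}$, contradicting Theorem~\ref{thm:dim_bound}. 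What can (and must) be prescribed at level $\ell-1$ is the translate of the affine hull, not extra directions.
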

\begin{proof}
Denoting by $\{e_i\}_i^n$ the standard basis of $\R^n$, we prove the following statement by induction on $\ell$.

\textbf{Claim.}
Let $n\geq\textstyle\sum_{k=1}^{\ell}(r_k-1)\textstyle\prod_{i=k+1}^\ell d_i\eqqcolon m_\ell$. Then, for every $v \in \R^n$ and $I\subseteq [n]$ with $|I| = m_\ell$ there is a polytope $P \in \dualpolytopesl{n}{\ell,\dvec,\rvec}$ such that $\aff(P)=v + \spn\{e_i:i \in I\}$.

For $\ell=0$, there is $m_\ell=0$ and thus $I=\emptyset$. In this case, for every $v\in\R^n$, we can just take $P=\{v\}\in\dualpolytopesl{n}{0,\dvec,\rvec}$ whose affine hull is $v+\spn\{e_i:i \in I\}=\{v\}$.

For $\ell\geq 1$, we have that $m_\ell = d_\ell@m_{\ell-1} +(r_\ell-1)$.
Let us take some $v \in \R^n$ and let $I=\{i_1,\ldots,i_{m_\ell}\} \subseteq [n]$ be an arbitrary index set of cardinality $m_\ell$.
We split
\begin{equation*}
    I=I_1\cupdot I_2 \cupdot \ldots \cupdot I_{d_\ell} \cupdot J
\end{equation*}
such that $|I_k|=m_{\ell-1}$ for $k\in[d_\ell]$, $|J|=r_\ell-1$, and denote $J=\{j_1,\ldots,j_{r_\ell-1}\}$.
By induction, for each $k\in[d_\ell]$, there exists a polytope
\begin{equation*}
    P_k\in\dualpolytopesl{n}{\ell-1,\dvec,\rvec}\quad\text{such that}\quad \aff(P_k) = t_k + \spn\{e_i:i\in I_k\},
\end{equation*}
where the reference points $t_k$ are free to be chosen.
Let us pick
\begin{equation*}
    t_{r_\ell}\coloneqq\frac{1}{r_\ell}\bigg(v-\sum\limits_{s=1}^{r_\ell-1}e_{j_s}\bigg)\quad\text{and}\quad t_i\coloneqq t_{r_\ell} + e_{j_i}\quad\text{for }i\in[r_\ell-1]
\end{equation*}
and set $t_k\coloneqq 0$ for $k>r_\ell$.
Note that the points are chosen such that
\begin{equation*}
    \sum\limits_{k=1}^{d_\ell} t_k = \sum\limits_{k=1}^{r_\ell-1} (t_{r_\ell}+e_{j_k}) +t_{r_\ell} = r_\ell t_{r_\ell} + \sum\limits_{k=1}^{r_\ell-1}e_{j_k} = v.
\end{equation*}
Furthermore, for $i\in[r_\ell-1]$, we define the polytopes
\begin{equation*}
    Q_i\coloneqq P_i +\!\!\sum\limits_{\substack{k=1,\;k\neq r_\ell}}^{d_\ell}\!\!\!\!\!\!P_k,\quad Q_{r_\ell}\coloneqq \sum\limits_{k=1}^{d_\ell}P_k,\quad\text{and}\quad P  \coloneqq \conv\left(\bigcup_{i=1}^{r_\ell}Q_i\right).
\end{equation*}
By construction, $P\in\dualpolytopesl{n}{\ell,\dvec,\rvec}$.
It remains to be shown that the affine hull of $P$ is $\aff(P) = v + \spn\{e_i : i\in I\}$.
We compute the affine hulls of the $Q_i$ first.
For $i\in[r_\ell-1]$, we have
\begin{align*}
    \aff(Q_i) &= \aff\big(P_i + \textstyle\sum_{k=1,\,k\neq r_\ell}^{d_\ell}P_k\big) \\
              &= t_i - t_{r_\ell}+ \textstyle\sum_{k=1}^{d_\ell}t_k + \spn\{e_i:i\in I\setminus (J\cup I_{r_\ell})\}\big\} \\
              &= e_{j_i} + v + \spn\{e_i:i\in I\setminus (J\cup I_{r_\ell})\}
\end{align*}
and similarly, $\aff(Q_{r_\ell}) = v + \spn\{e_i:i\in I\setminus J\}$. The affine hull of $P$ is
\begin{align*}
    \aff(P) &= \aff\big(\conv\big(\textstyle\bigcup_{i=1}^{r_\ell}Q_i\big)\big) \\
            &= \aff\big(\textstyle\bigcup_{i=1}^{r_\ell}Q_i\big) \\
            &= \aff\big(\textstyle\bigcup_{i=1}^{r_\ell}\aff(Q_i)\big) \\
            &= \aff\big(\textstyle\bigcup_{i=1}^{r_\ell-1}(e_{j_i} + v + \spn\{e_i:i\in I\setminus (J\cup I_{r_\ell})\}) \\&\hspace{13mm}\cup\aff(v+\spn\{e_i:i\in I\setminus J\})\big) \\[-1mm]
            &\overset{(\hspace{-0.25mm}*\hspace{-0.25mm})}{=} v + \spn\{e_{j_1},\ldots,e_{j_{r_\ell-1}}\} + \spn\{e_i:i\in I\setminus J\}\\
            &= v + \spn\{e_i:i\in I\},
\end{align*}
where Lemma~\ref{lem:affhull_subspaces} is used at $(\hspace{-0.25mm}*\hspace{-0.25mm})$.
This concludes our induction.
\end{proof}

\section{Consequences for the expressivity of sparse networks}\label{sec:expressivity}

Knowing which dimensions can be realized by virtual polytopes from sparse networks, we now explore consequences for the expressivity of corresponding network classes.
The results developed to that point yield the following hierarchy.

\begin{theorem}\label{thm:hierarchydualpolytopesl}
For parameters $\dvec\geq \rvec$, there holds
\begin{equation*}
    \dualpolytopesl{n}{0,\dvec,\rvec}\subsetneq\ldots\subsetneq \dualpolytopesl{n}{\ell',\dvec,\rvec},
\end{equation*}
where $\ell'\in\N$ is the depth for which $n\geq m_{\ell'}\coloneqq \textstyle\sum_{k=1}^{\ell'}(r_k-1)\textstyle\prod_{i=k+1}^{\ell'} d_i$, but $n<m_{\ell'+1}$.
\begin{proof}
The inclusions follow from the recursion in Definition~\ref{def:dualpolytopes}.
The depth $\ell'$ is the largest index for which the dimension bound, Theorem~\ref{thm:dim_bound}, and its attainment, Theorem~\ref{thm:dimboundattained}, apply.
This ensures strictness up to that level.
\end{proof}
\end{theorem}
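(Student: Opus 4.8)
The plan is to establish two things separately: (i) the chain of inclusions $\dualpolytopesl{n}{k,\dvec,\rvec}\subseteq\dualpolytopesl{n}{k+1,\dvec,\rvec}$ for every $k$, and (ii) that each of these inclusions is proper as long as $k+1\le\ell'$. Part (i) is purely formal and follows directly from the recursive shape in Definition~\ref{def:dualpolytopes}. Given $V\in\dualpolytopesl{n}{k,\dvec,\rvec}$, I would exhibit it as a level-$(k+1)$ object by a degenerate choice of data: take $V_1=V$ and pad the remaining slots with $V_2=\dots=V_{d_{k+1}}=\{0\}$ (the point $\{0\}$ lies in $\dualpolytopesl{n}{k,\dvec,\rvec}$ for every $k$, since setting all coefficients to zero in the recursion produces it), and set $\alpha_{i1}=1$, $\alpha_{ij}=0$ for $j\ge 2$. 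Then $\sum_{j}\alpha_{ij}V_j=V$ for each $i\in[r_{k+1}]$, so $\conv\{V,\dots,V\}=V$ by the virtual-polytope identity that the convex hull of equal copies of $V$ is $V$. Hence $V\in\dualpolytopesl{n}{k+1,\dvec,\rvec}$; the base case $k=0$ is immediate since $\{0\}\in\dualpolytopesl{n}{0,\dvec,\rvec}$.

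For part (ii) I would run a dimension count, which is where the two theorems of Section~\ref{sec:dimension} do all the work. Write $m_k\coloneqq\sum_{t=1}^{k}(r_t-1)\prod_{i=t+1}^{k}d_i$, so that $m_{k+1}=d_{k+1}m_k+(r_{k+1}-1)$. By Theorem~\ref{thm:dim_bound}, every $V\in\dualpolytopesl{n}{k,\dvec,\rvec}\subseteq\dualpolytopesl{}{k,\dvec,\rvec}$ satisfies $\dim(V)\le m_k$. On the other hand, for any level $k+1\le\ell'$ the hypotheses of Theorem~\ref{thm:dimboundattained} are in force: $d_i\ge r_i$ for all $i$ by the standing assumption $\dvec\ge\rvec$, and $n\ge m_{\ell'}\ge m_{k+1}$ since $(m_j)$ is nondecreasing and $k+1\le\ell'$. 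Thus there exists $P\in\dualpolytopesl{n}{k+1,\dvec,\rvec}$ with $\dim(P)=m_{k+1}$. Since $m_{k+1}=d_{k+1}m_k+(r_{k+1}-1)>m_k$ in this range, the dimension of $P$ exceeds the bound valid on $\dualpolytopesl{n}{k,\dvec,\rvec}$, so $P\notin\dualpolytopesl{n}{k,\dvec,\rvec}$, giving strictness; combining with (i) yields the asserted chain up to $\ell'$.

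The only genuinely delicate point — and really the only one, since Theorems~\ref{thm:dim_bound} and~\ref{thm:dimboundattained} carry the substance — is the bookkeeping around $\ell'$ together with the strict monotonicity $m_k<m_{k+1}$. One has $m_{k+1}-m_k=(d_{k+1}-1)m_k+(r_{k+1}-1)\ge 0$, with equality only in the degenerate case $r_{k+1}=1$ and ($d_{k+1}=1$ or $m_k=0$), i.e. when layer $k+1$ contributes nothing; in the regime of interest (genuine maxout, $r_k\ge 2$, or simply once $m_k\ge 1$) the increase is strict, and the defining property of $\ell'$ — $n\ge m_{\ell'}$ but $n<m_{\ell'+1}$ — already forces $m_{\ell'}<m_{\ell'+1}$, so $\ell'$ is well defined and the strict chain terminates exactly where attainment of the bound stops. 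I would isolate this monotonicity as a one-line observation (or invoke the standing non-degeneracy) so that the main argument reduces cleanly to: upper bound on level $k$, matching lower bound realized at level $k+1$, done.
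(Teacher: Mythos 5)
Your proposal is correct and follows essentially the same route as the paper's (very terse) proof: inclusions via the recursion in Definition~\ref{def:dualpolytopes} with a degenerate padding, and strictness by playing the upper bound of Theorem~\ref{thm:dim_bound} at level $k$ against the attainment of $m_{k+1}$ from Theorem~\ref{thm:dimboundattained} at level $k+1$. Your added remark that strict monotonicity $m_k<m_{k+1}$ can fail when $r_{k+1}=1$ (or $d_{k+1}=1$ with $m_k=0$) is a legitimate point that the paper glosses over, and isolating it as you suggest is a worthwhile refinement rather than a deviation.
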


\begin{corollary}\label{cor:hierarchynetworksl}
Under the conditions of Theorem~\ref{thm:hierarchydualpolytopesl}, there holds
\begin{equation*}
    \networksl{n}{0,\dvec,\rvec}\subsetneq\ldots\subsetneq \networksl{n}{\ell',\dvec,\rvec}.
\end{equation*}
\begin{proof}
This follows directly from Theorem~\ref{thm:hierarchydualpolytopesl} and the duality given by Theorem~\ref{thm:iso_sparse_virtual}.
\end{proof}
\end{corollary}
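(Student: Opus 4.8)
The plan is to transport the strict chain of \Cref{thm:hierarchydualpolytopesl} through the duality of \Cref{thm:iso_sparse_virtual}. The crucial point is that, for every $\ell$, the bijection $V\mapsto f_V$ between $\dualpolytopesl{n}{\ell,\dvec,\rvec}$ and $\networksl{n}{\ell,\dvec,\rvec}$ asserted by \Cref{thm:iso_sparse_virtual} is nothing but the restriction of the single global bijection $\vPolytopes_n\to\hcpwl_n$ from \Cref{lem:iso_cpwl_virtual}; in particular it does not depend on $\ell$. Since \Cref{def:dualpolytopes} exhibits $\dualpolytopesl{n}{\ell-1,\dvec,\rvec}$ as a subset of $\dualpolytopesl{n}{\ell,\dvec,\rvec}$, applying this common map level by level immediately yields the inclusions $\networksl{n}{0,\dvec,\rvec}\subseteq\ldots\subseteq\networksl{n}{\ell',\dvec,\rvec}$.

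It then remains to upgrade each inclusion to a strict one. First I would fix a level $k\in\{1,\ldots,\ell'\}$ and invoke \Cref{thm:hierarchydualpolytopesl} to pick a virtual polytope $V\in\dualpolytopesl{n}{k,\dvec,\rvec}\setminus\dualpolytopesl{n}{k-1,\dvec,\rvec}$. Its image $f_V$ lies in $\networksl{n}{k,\dvec,\rvec}$ by \Cref{thm:iso_sparse_virtual}. Suppose toward a contradiction that $f_V\in\networksl{n}{k-1,\dvec,\rvec}$; then \Cref{thm:iso_sparse_virtual} applied at level $k-1$ produces some $W\in\dualpolytopesl{n}{k-1,\dvec,\rvec}$ with $f_W=f_V$, and injectivity of the global map from \Cref{lem:iso_cpwl_virtual} forces $W=V$, contradicting $V\notin\dualpolytopesl{n}{k-1,\dvec,\rvec}$. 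Hence $f_V\in\networksl{n}{k,\dvec,\rvec}\setminus\networksl{n}{k-1,\dvec,\rvec}$, so the $k$-th inclusion is proper. Ranging over $k$ gives the full strict chain.

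There is no genuine obstacle here: the entire argument is a routine transfer along the established duality, and the only point deserving a moment's attention is the compatibility already noted — that the level-wise bijections of \Cref{thm:iso_sparse_virtual} are restrictions of one injective map on $\vPolytopes_n$ — which is built into the proof of that theorem via the inclusions $\dualpolytopesl{n}{\ell,\dvec,\rvec}\subseteq\dualpolytopes{n}{\ell,\dvec,\rvec}\subseteq\vPolytopes_n$. Everything else follows directly.
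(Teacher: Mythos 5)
Your proposal is correct and follows essentially the same route as the paper, which likewise derives the strict chain by transporting Theorem~\ref{thm:hierarchydualpolytopesl} through the duality of Theorem~\ref{thm:iso_sparse_virtual}; you merely spell out the (correct) detail that the level-wise bijections are restrictions of the single injective map from Lemma~\ref{lem:iso_cpwl_virtual}, so strict inclusions of preimages yield strict inclusions of images.
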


Our next goal is to show that every indegree constraint is a real constraint and cannot be compensated by width or depth. For this, we aim to show that $\networks{n}{\ell,\dvec,\rvec} \subseteq \MAX_n(m_\ell + 1)$ for a suitable $m_\ell$ that does not depend on the input dimension $n$. We then proceed by showing that for every input dimension $n$ there is a function $f\in \networks{n}{2,\mathbf{2}} \setminus \MAX_n(n)$. Hence, for every $\ell,\mathbf{r},\mathbf{d}$, not all functions representable with a fully-connected network that has only two hidden layers, is in $\networks{n}{\ell,\dvec,\rvec}$.
For the first part, we build on the fact that every polytope admits a triangulation into simplices.
For example, this is discussed by \cite{lee2017subdivisions}. Lemma 9 in \cite{bakaev2025better} implies the following.

\begin{lemma}\label{lem:sum_of_simplices}
Let $P\subseteq \R^n$ be a polytope of dimension $d$.
Then $P$ is a linear combination of simplices of dimension at most $d$.
\end{lemma}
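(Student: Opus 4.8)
The plan is to combine a triangulation of $P$ with an inclusion--exclusion argument carried out in the Grothendieck group $\vPolytopes_n$. First I would pick a triangulation of $P$ into $d$-dimensional simplices $\Delta_1,\dots,\Delta_N$ forming a geometric simplicial complex whose union is $P$; such a triangulation exists for every polytope \citep{lee2017subdivisions}. Because the $\Delta_i$ form a simplicial complex, for every $\emptyset\neq S\subseteq[N]$ the intersection $\Delta_S\coloneqq\bigcap_{i\in S}\Delta_i$ is a common face of the simplices involved, hence itself a simplex of dimension at most $d$, or empty. The goal is then to establish the identity
\[
    P = \sum_{\emptyset\neq S\subseteq[N]} (-1)^{|S|+1}\,\Delta_S
\]
in $\vPolytopes_n$, with the convention that empty intersections contribute the neutral element $0$. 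Since each nonempty $\Delta_S$ is a simplex of dimension at most $d$, this exhibits $P$ as an (in fact integer) linear combination of such simplices, which is the claim.

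To give meaning to this formula I would show that the assignment sending a polytope to its class in $\vPolytopes_n$ is a valuation, i.e.\ that $\conv(P\cup Q)+(P\cap Q)=P+Q$ whenever $P,Q\in\Polytopes_n$ and $P\cup Q$ is convex. Passing to support functions via \Cref{thm:duality} and using $f_{\conv(P\cup Q)}=\max\{f_P,f_Q\}$ together with the pointwise identity $\min\{f_P,f_Q\}=f_P+f_Q-\max\{f_P,f_Q\}$, this reduces to showing $f_{P\cap Q}=\min\{f_P,f_Q\}$. The inequality ``$\le$'' is immediate from $P\cap Q\subseteq P,Q$. For ``$\ge$'' I would argue by contradiction: if some direction $x$ satisfied $f_{P\cap Q}(x)<\min\{f_P(x),f_Q(x)\}$, choose $p\in P$ and $q\in Q$ with $p^\top x>f_{P\cap Q}(x)$ and $q^\top x>f_{P\cap Q}(x)$; the segment $[p,q]$ lies in $\conv(P\cup Q)=P\cup Q$, and every point of it has $x$-value exceeding $f_{P\cap Q}(x)$; sweeping along the segment from $p\in P$ towards $q\in Q$ and using closedness of $P$ and $Q$ produces a point of $P\cap Q$ on $[p,q]$, contradicting the choice of $x$.

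Given this two-term relation, the full inclusion--exclusion formula above follows from the standard extension principle for group-valued valuations (a Groemer-type argument): a valuation on polytopes extends to the ring of finite unions and there satisfies inclusion--exclusion, which applied to $P=\bigcup_i\Delta_i$ yields the displayed identity; reading it in $\vPolytopes_n$ (or pulling it back through \Cref{lem:iso_cpwl_virtual}) finishes the proof. The step I expect to be the main obstacle is precisely this passage from the two-set identity on convex unions to inclusion--exclusion over the whole triangulation: the partial unions $\Delta_1\cup\dots\cup\Delta_j$ need not be convex, so one cannot simply iterate the two-set relation. One can either invoke the general valuation-extension theorem, or sidestep it by choosing a shellable (for instance regular or pulling) triangulation whose partial unions are convex and inducting on $N$. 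The remaining ingredients---existence of a triangulation, the observation that faces of simplices are simplices of dimension at most $d$, and the support-function computations---are routine.
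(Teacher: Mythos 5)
The paper does not actually prove this lemma; it derives it from Lemma~9 of \citet{bakaev2025better}, so any self-contained argument is necessarily a different route. Your route is sound in its core: the identity $f_{P\cap Q}=\min\{f_P,f_Q\}$ for polytopes with convex union is correct, and your sweeping argument along the segment $[p,q]$ is a complete proof of the nontrivial inequality; combined with $f_{\conv(P\cup Q)}=\max\{f_P,f_Q\}$ from \Cref{thm:duality}, it shows that $P\mapsto f_P$ is a valuation with values in the vector space $\hcpwl_n$. The passage from this two-set relation to the full inclusion--exclusion formula over a triangulation is exactly the classical extension theorem for group-valued valuations on polytopes (Volland; see also Groemer and the surveys of McMullen and Schneider on valuations and dissections). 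Invoking that theorem is legitimate and closes the proof, since every nonempty $\Delta_S$ is a common face of the $\Delta_i$ and hence a simplex of dimension at most $d$. What your approach buys over the paper's bare citation is a reduction of the lemma to one standard fact of valuation theory.

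The weak point is the fallback you offer in case one does not want to cite that theorem. A shelling of a triangulation only guarantees that each new simplex meets the union of its predecessors in a nonempty union of facets; it does not make the partial unions convex (already the fan triangulation of a hexagon from its centroid, shelled cyclically, has a non-convex partial union after four triangles), and neither regularity nor the pulling construction repairs this in dimension $\geq 3$. Moreover, even where the partial union $R_{j-1}\coloneqq\Delta_1\cup\dots\cup\Delta_{j-1}$ is convex, the two-set relation produces the term $\phi(R_{j-1}\cap\Delta_j)$, and $R_{j-1}\cap\Delta_j$ is a polytope of dimension at most $d-1$ that need not be a simplex, so the induction on $N$ must be coupled with an induction on dimension. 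So if you want to avoid the valuation-extension theorem, a cleaner elementary route is to induct on the number of vertices of $P$ inside $\aff(P)$, using an affine (Radon-type) dependence among any $d+2$ of them to rewrite $\max_j a_j^\top x$ as a combination of maxima over proper subsets, in the spirit of the proof of \Cref{thm:WangSun}; this is closer to the argument behind the cited Lemma~9. As submitted, your proof is correct provided the extension theorem is cited; the sketched elementary alternative does not work as stated.
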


\begin{theorem}\label{thm:inclusion_in_maxterms}
Given parameters $\ell\in\N_0$, $\dvec,\rvec\in\N^\ell$, if $m_\ell\coloneqq\textstyle\sum_{k=1}^{\ell}(r_k-1)\textstyle\prod_{i=k+1}^{\ell} d_i$, then
\begin{equation*}
  \networks{n}{\ell,\dvec,\rvec} \subseteq \MAX_n(m_\ell + 1).
\end{equation*}
\end{theorem}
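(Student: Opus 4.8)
The strategy is to translate the problem into the language of virtual polytopes using the duality of Theorem~\ref{thm:iso_sparse_virtual}, reduce to individual polytopes via the dimension bound and a triangulation argument, and then reassemble the pieces as a bounded-arity maxterm expression. Concretely, let $f\in\networks{n}{\ell,\dvec,\rvec}$. By Theorem~\ref{thm:iso_sparse_virtual} there is a virtual polytope $V\in\dualpolytopes{n}{\ell,\dvec,\rvec}$ with $f=f_V$, and by Definition~\ref{def:dualpolytopes} we may write $V=\sum_{i=1}^k\alpha_i V_i$ with each $V_i\in\dualpolytopesl{n}{\ell,\dvec,\rvec}$. Applying the additivity of the support-function map (Lemma~\ref{lem:iso_cpwl_virtual}, part~\ref{lem:iso_cpwl_virtual:i}), $f=\sum_{i=1}^k\alpha_i f_{V_i}$, so since $\MAX_n(m_\ell+1)$ is a vector space it suffices to show each $f_{V_i}\in\MAX_n(m_\ell+1)$. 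Thus we reduce to a single $V\in\dualpolytopesl{n}{\ell,\dvec,\rvec}$.

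For such a $V$, Theorem~\ref{thm:dim_bound} gives $\dim(V)\le m_\ell$. The dimension of a virtual polytope is defined as $\min\{\dim(P+Q):V=P-Q\}$, so choose a representation $V=P-Q$ with $\dim(P+Q)\le m_\ell$. Then $P$ and $Q$ each have dimension at most $m_\ell$ (a polytope summand of $P+Q$ lies in a translate of $\spn(P+Q-(P+Q))$). By Lemma~\ref{lem:sum_of_simplices}, both $P$ and $Q$ are linear combinations of simplices of dimension at most $m_\ell$. A simplex of dimension at most $m_\ell$ has at most $m_\ell+1$ vertices, so its support function is $\max\{a_1^\top x,\ldots,a_q^\top x\}$ with $q\le m_\ell+1$, hence lies in $\MAX_n(m_\ell+1)$ (pad with a repeated vertex if $q<m_\ell+1$, which does not change the max). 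Applying part~\ref{lem:iso_cpwl_virtual:i} and~\ref{lem:iso_cpwl_virtual:ii} of Lemma~\ref{lem:iso_cpwl_virtual} (support functions turn Minkowski sums into sums and scalings into scalings) shows $f_P$ and $f_Q$ are each linear combinations of such simplex support functions, hence lie in $\MAX_n(m_\ell+1)$; therefore $f_V=f_P-f_Q\in\MAX_n(m_\ell+1)$.

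I expect the main subtlety to be a bookkeeping point rather than a deep obstacle: one must verify that $\MAX_n(m)$ is genuinely closed under linear combinations and under padding the arity of a max down from something smaller to exactly $m$ — the latter because $\max\{a_1^\top x,\ldots,a_q^\top x\}=\max\{a_1^\top x,\ldots,a_q^\top x,a_q^\top x,\ldots\}$, so any maxterm of arity $\le m_\ell+1$ sits inside $\MAX_n(m_\ell+1)$. One also needs the elementary fact that a Minkowski summand of a polytope of dimension $\delta$ itself has dimension at most $\delta$, which follows since $P+Q$ spans (a translate of) $\spn(P-P)+\spn(Q-Q)$, so each of $\spn(P-P)$ and $\spn(Q-Q)$ has dimension at most $\dim(P+Q)$. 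Combining these observations with Lemma~\ref{lem:sum_of_simplices} and the duality closes the argument; nothing here requires new machinery beyond what has already been established.
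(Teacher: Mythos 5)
Your proposal is correct and follows essentially the same route as the paper: pass to the dual virtual polytope, reduce to a single element of $\dualpolytopesl{n}{\ell,\dvec,\rvec}$ by linearity, apply the dimension bound of Theorem~\ref{thm:dim_bound} and the simplex decomposition of Lemma~\ref{lem:sum_of_simplices}, and translate back via support functions. If anything, your version is slightly more careful than the paper's, since you explicitly justify applying Lemma~\ref{lem:sum_of_simplices} (stated for genuine polytopes) to a virtual polytope by picking a representation $V=P-Q$ realizing the dimension and noting that each Minkowski summand of $P+Q$ has dimension at most $\dim(P+Q)$.
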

\begin{proof}
Let be given an arbitrary function $f\in \networks{n}{\ell,\dvec,\rvec}$.
By Theorem~\ref{thm:iso_sparse_virtual}, its Newton polytope $V(f)$ is in $\dualpolytopes{n}{\ell,\dvec,\rvec}$.
The dimension bound, Theorem~\cref{thm:dim_bound}, implies that $V(f)$ is a linear combination of virtual polytopes of dimension at most $m_\ell$.
Together with Lemma \cref{lem:sum_of_simplices}, this says that $V(f)$ is a linear combination of simplices of dimension at most $m_\ell$.
In other words, $V(f)$ decomposes into convex hulls of at most $m_\ell +1$ affinely independent points.
By \cref{lem:iso_cpwl_virtual}, it follows that $f \in M_n(m_\ell + 1)$, proving the claim.
\end{proof}

Our argumentation relies on the following result by \cite{hertrich2021towards}. 
Given a polyhedral complex $\mathcal{P}$, the set of all $k$-dimensional polyhedra in $\mathcal{P}$ is denoted by $\mathcal{P}^{k}$.
\begin{lemma}\label{lem:hyperplanetest}
Let $f\colon\R^n\to\R$ be a convex \cpwl function and let $\mathcal{P}$ be its underlying polyhedral complex.
If there exists a hyperplane $H\subseteq\R^n$ such that the set
\begin{equation*}
    T\coloneqq\!\!\bigcup\limits_{F\in\mathcal{P}^{n-1}:F\subseteq H}\!\!\!\!\!\!\!\!\!\!F
\end{equation*}
is nonempty and contains no line, then $f$ is not contained in $\MAX_n(n)$.
\end{lemma}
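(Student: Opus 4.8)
I would argue by contraposition: assuming $f\in\MAX_n(n)$ and that $T\neq\emptyset$, I produce a line inside $T$. Since every element of $\MAX_n(n)$ is positively homogeneous, I may assume $f\in\hccpwl_n$ (otherwise $f\notin\MAX_n(n)$ trivially), so $\mathcal P$ is a fan; and since any $(n-1)$-cell $F\subseteq H$ satisfies $\aff(F)=H$, the hyperplane $H$ is linear, say $H=u^{\perp}$. Writing $f=\sum_i\beta_i f_{S_i}$ with $S_i=\conv\{a_{ij}:j\in[n]\}$ as in the definition of $\MAX_n(n)$, each $S_i$ has at most $n$ vertices, hence $\dim S_i\le n-1$; by \Cref{lem:iso_cpwl_virtual} the Newton polytope of $f$ is $V(f)=\sum_i\beta_i S_i$ (a virtual sum) and $\mathcal P$ is the normal fan of the polytope $V(f)$. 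Under this identification the $(n-1)$-cells of $\mathcal P$ are exactly the normal cones of edges of $V(f)$, and such a normal cone lies in $H=u^{\perp}$ precisely when the edge is parallel to $u$. Thus $T\neq\emptyset$ means $V(f)$ has an edge parallel to~$u$, and $T$ is the union of the normal cones of all such edges.

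The geometric heart is that a \emph{low-dimensional} Minkowski summand carrying such an edge drags a whole subspace into $T$. Split $V(f)=A-B$ with $A=\sum_{\beta_i>0}\beta_i S_i$, $B=\sum_{\beta_i<0}(-\beta_i)S_i$, so that $V(f)+B=A$; then the normal fan of $A$ refines that of $V(f)$, hence every edge direction of $V(f)$ is an edge direction of $A$, and, $A$ being a Minkowski sum, of one of its summands. Choose such a summand $S_{i^{\ast}}$ (so $\beta_{i^{\ast}}>0$ and $u\in L:=\spn(S_{i^{\ast}}-S_{i^{\ast}})$) and put $K:=L^{\perp}$; then $\dim L\le n-1$, so $\dim K\ge 1$, and $K\subseteq u^{\perp}$ because $u\in L$. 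For $w\in K$ the functional $x\mapsto w^{\top}x$ is constant on $\aff(S_{i^{\ast}})$, so the face of $A$ exposed by $w$ decomposes as a Minkowski sum $\beta_{i^{\ast}}S_{i^{\ast}}+P_w$; since $\beta_{i^{\ast}}S_{i^{\ast}}$ has an edge parallel to $u$, so does this face, using the elementary fact that if a polytope $G$ has an edge parallel to $u$ then so does $G+G'$ for every polytope $G'$ (expose by a generic direction $\perp u$ in the relative interior of that edge's normal cone). The plan is then to transfer this $u$-edge from $\operatorname{face}_w(A)$ down to $\operatorname{face}_w(V(f))$ for every $w\in K$, which gives $K\subseteq T$ and hence a line in $T$.

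The transfer step is where the real work lies, and I expect it to be the main obstacle. From $V(f)+B=A$ we get $\operatorname{face}_w(V(f))+\operatorname{face}_w(B)=\operatorname{face}_w(A)$, and a $u$-edge of a Minkowski sum is a $u$-edge of one of the summands; so either $\operatorname{face}_w(V(f))$ has one (then $w\in T$, done) or $\operatorname{face}_w(B)$ does, and this latter possibility must be excluded. The point is that the $u$-parallel segment sitting as a Minkowski summand of $\operatorname{face}_w(A)=\beta_{i^{\ast}}S_{i^{\ast}}+P_w$ cannot be absorbed entirely into $\operatorname{face}_w(B)$ while keeping $\operatorname{face}_w(V(f))$ a genuine polytope: passing back through the support-function identity $f=\beta_{i^{\ast}}f_{S_{i^{\ast}}}+\sum_{i\neq i^{\ast}}\beta_i f_{S_i}$ and differentiating in the direction $w\in K$ shows $\operatorname{face}_w(V(f))=\beta_{i^{\ast}}S_{i^{\ast}}+\widetilde W$ for a virtual polytope $\widetilde W$, and one has to conclude from the genuineness of $\operatorname{face}_w(V(f))$ and $\beta_{i^\ast}>0$ that it carries a $u$-edge — a careful bookkeeping of which faces of the positive and negative summands contribute the $u$-direction. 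Everything else is routine: the correspondence between $(n-1)$-cells of a normal fan and edges of the polytope, refinement of normal fans under Minkowski addition, distributivity of $\operatorname{face}_w(\cdot)$ over Minkowski sums, and the edge-persistence fact invoked above.
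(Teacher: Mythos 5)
Your reduction to the polytope side is sound up to the transfer step: identifying the $(n-1)$-cells of $\mathcal P$ contained in $H=u^{\perp}$ with normal cones of $u$-parallel edges of the Newton polytope, the refinement argument showing that $A=\sum_{\beta_i>0}\beta_i S_i$ inherits every edge direction of $V(f)$, and the edge-persistence fact are all correct (modulo the harmless case $\dim V(f)<n$, where every cell of the normal fan contains a line and there is nothing to prove). But the step you yourself flag as ``the main obstacle'' is a genuine gap, and the way you propose to close it cannot work. You want to conclude that a genuine polytope of the form $\beta_{i^{*}}S_{i^{*}}+\widetilde W$ (a virtual sum with $\beta_{i^{*}}>0$ and $S_{i^{*}}$ carrying a $u$-edge) must itself have a $u$-edge. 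This is false: in $\R^3$ with $u=e_3$, the unit square $Q=[0,e_1]+[0,e_2]$ equals $[0,e_3]+\bigl(Q-[0,e_3]\bigr)$ as a virtual sum, is genuine, has positive coefficient on the $u$-summand $[0,e_3]$, and has no $e_3$-parallel edge. Deciding when a $u$-edge of a positive summand survives cancellation is precisely the content of the lemma, so deferring it to ``careful bookkeeping'' is essentially circular.

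Moreover, the intermediate claim $K=L^{\perp}\subseteq T$ already fails for a legitimate choice of $i^{*}$. Take $n=3$, $u=e_3$, and $f=\max\{0,x_2,x_3\}+\max\{0,x_1\}+\max\{0,x_3\}-\max\{0,x_3\}$, a valid $\MAX_3(3)$ representation of the convex function with $V(f)=\conv\{0,e_2,e_3\}+[0,e_1]$, a triangular prism; here $T=\{w: w_3=0,\ w_2\le 0\}$. The segment $S_{i^{*}}=[0,e_3]$ is a positive summand of $A$ carrying a $u$-edge, yet $K=\{w: w_3=0\}\not\subseteq T$: for $w=e_2\in K$ the face of $V(f)$ exposed by $w$ is the edge $[e_2,e_1+e_2]$, with no $e_3$-edge, while the $e_3$-edge of the exposed face of $A$ (a square) is absorbed entirely by the exposed face of $B=[0,e_3]$ --- exactly the ``latter possibility'' you need to exclude. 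Your argument gives no criterion for choosing $i^{*}$, and nothing guarantees that a good choice exists. Note also that the paper does not prove this lemma; it imports it from \citet{hertrich2021towards}. The proof there proceeds differently: it works locally at a generic point $x$ of an $(n-1)$-cell $F\subseteq H$, expresses the kink of $f$ across $H$ at $x$ as a signed sum of the kinks of the terms $f_{S_i}$, and exploits that each such kink locus is invariant under translation by some nonzero $v_i\perp\spn(S_i-S_i)$ (available because $\dim S_i\le n-1$) to propagate an entire line inside $T$. An ingredient of that local, translation-invariance type is what your summand-picking argument is missing.
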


\begin{theorem}\label{thm:widthcannotcompensate}
There exists a function $f\in \networks{n}{2,\mathbf{2}} \setminus \MAX_n(n)$.
In particular, for $\ell\geq 2$, $\dvec\geq \mathbf{2}$, $\rvec\in\N^\ell$, and $n\geq m_\ell +1$, $m_\ell\coloneqq\textstyle\sum_{k=1}^{\ell}(r_k-1)\textstyle\prod_{i=k+1}^{\ell} d_i$, there holds
\begin{equation*}
  \networks{n}{\ell,\dvec,\rvec} \subsetneq \networks{n}{\ell,\rvec}.
\end{equation*}
\begin{proof}
By construction, the function
\begin{equation*}
    f\colon\R^n\to \R,\quad f(x) = \max \bigg\{ \sum_{j=1}^{n-2} \max\{0,x_j\},\,\max\{x_{n-1},x_{n}\}\bigg\}
\end{equation*}
is contained in $\networks{n}{2,\mathbf{2}}$, which in turn is a subset of $\networks{n}{\ell,\rvec}$ for $\ell\geq 2$ and $\dvec\geq \mathbf{2}$.

The inclusion $\networks{n}{\ell,\dvec,\rvec} \subseteq \networks{n}{\ell,\rvec}$ holds by definition.
It is strict if $n\geq m_\ell +1$ and $f\notin\MAX_n(n)\supseteq\MAX_n(m_\ell+1)\supseteq \networks{n}{\ell,\dvec,\rvec}$, where the latter inclusion is by Theorem~\ref{thm:inclusion_in_maxterms}.

It remains to be shown that $f\notin\MAX_n(n)$.
We prove that by applying Lemma~\ref{lem:hyperplanetest} to $f$ and the hyperplane $H\coloneqq\{x\in \R^{n}:x_{n-1}=x_{n}\}$.
Certainly, $f$ is convex and \cpwl.
Also, we may rewrite it as
\begin{equation*}
    f(x) = \max \{S(x),\,x_{n-1},\,x_{n}\},\quad\text{where }S(x)\coloneqq \sum_{j=1}^{n-2} \max\{0,x_j\}. 
\end{equation*}
Furthermore, denote by $\mathcal{P}$ the polyhedral complex underlying $f$.
To use Lemma~\ref{lem:hyperplanetest}, we have to verify that $T\coloneqq\textstyle\bigcup_{F\in\mathcal{P}^{n-1}:F\subseteq H} F$ is nonempty and contains no line.

First, we show that there is exactly one $F\in\mathcal{P}^{n-1}$ with $F\subseteq H$, the polyhedron $F^\star\coloneqq \{x\in \R^n:x_{n-1}=x_{n}\geq S(x)\}$.

To see what is in $\mathcal{P}^{n-1}$, let us list all elements of $\mathcal{P}^n$.
The maximal regions on which~$S(x)$ is affine are the sign chambers $\Pi_J\coloneqq\{x\in \R^n: x_j\geq0 \text{ for }j\in J,\,x_j\leq0 \text{ for }j\notin J\}$, $J\subseteq[n-2]$, where $S(x)=\sum_{j\in J}x_j$.
So, $f$ is affine on
\begin{align*}
    C_J&\coloneqq \{x\in\R^n:S(x)\geq x_{n-1},\,S(x)\geq x_{n}\} \cap \Pi_J \quad\text{for }J\subseteq[n-2],\\
    C_{n-1}&\coloneqq\{x\in\R^n:x_{n-1}\geq x_{n},\,x_{n-1}\geq S(x)\}, \\
    C_{n}&\coloneqq\{x\in\R^n:x_{n}\geq x_{n-1},\,x_{n}\geq S(x)\}.
\end{align*}
The polyhedra in $\mathcal{P}^{n-1}$ are intersections of two distinct polyhedra from $\mathcal{P}^n$.
We obtain
\begin{equation*}
C_{n-1}\cap C_{n}=\{x\in\R^n: x_{n-1} = x_{n} \geq S(x)\}=F^\star.
\end{equation*}
So $F^\star\in \mathcal{P}^{n-1}$ and certainly also $F^\star\subseteq H$, thus $F^\star\subseteq T$.

Any other intersection of polyhedra from $\mathcal{P}^n$ imposes further equality constraints.
If different $C_{J_1}$, $C_{J_2}$ intersect, then some $x_j$, $j\in[n-2]$, has to be zero.
If some $C_J$ intersects with $C_{n-1}$, then $S(x)=\textstyle\sum_{j\in J} x_j=x_{n-1}$, and likewise if some $C_J$ intersects $C_{n}$.
With the additional constraint $x_{n-1}=x_{n}$ enforced by $F\subseteq H$, any such intersection has dimension at most $n-2$.
Therefore $F^\star =T=\{x\in\R^n: x_{n-1}=x_{n}\geq S(x)\}$.

It remains to be shown that $T$ contains no line. 
Suppose, for a contradiction, that $L=\{y+\lambda v:\lambda\in \R\}\subseteq T$ for some $y\in\R^n$ and $v\in\R^n$. Setting $t\coloneqq v_{n-1}=v_{n}$, we have
\begin{equation*}
    \sum_{j=1}^{n-2} \max\{0,y_j+\lambda v_j\}= S(y+\lambda v)\leq y_{n-1} + \lambda v_{n-1} = y_{n} + \lambda v_{n} = y_{n} + \lambda t
\end{equation*}
for all~$\lambda\in \R$.
Dividing both sides by $\lambda>0$ yields
\begin{equation*}
    \sum_{j=1}^{n-2} \max\{0,\frac{y_j}{\lambda}+ v_j\} \leq \frac{y_{n}}{\lambda} + t
\end{equation*}
Taking the limit $\lambda\to\infty$, we obtain
\begin{equation*}
    \sum_{j=1}^{n-2} \max\{0,v_j\} \leq t.
\end{equation*}
Analogously, dividing by $\lambda<0$ and taking the limit $\lambda\to-\infty$ yields
\begin{equation*}
    \sum_{j=1}^{n-2} \max\{0,-v_j\} \leq -t.
\end{equation*}
Adding up those inequalities gives
\begin{equation*}
    \sum_{j=1}^{n-2} |v_j| = \sum_{j=1}^{n-2} \big(\max\{0,v_j\} + \max\{0,-v_j\}\big) \leq 0.
\end{equation*}
Thus $v_j=0$ for $j\in[n-2]$.
With this, $S(y+\lambda v)=S(y)$ for all $\lambda\in\R$.
The inequality $S(y)\leq y_n +\lambda t$ must then hold for all $\lambda$, hence letting $\lambda\to\pm\infty$ forces $t=0$ and thus $v=0$.
In other words, there is no line in $T$, concluding the proof.
\end{proof}
\end{theorem}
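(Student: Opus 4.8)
The plan is to name a single explicit function that is easy to realize by an unconstrained shallow network but provably escapes $\MAX_n(n)$, and then let \Cref{thm:inclusion_in_maxterms} do the rest. Concretely, I would take
\[
  f(x)=\max\Big\{\textstyle\sum_{j=1}^{n-2}\max\{0,x_j\},\ \max\{x_{n-1},x_n\}\Big\}.
\]
That $f\in\networks{n}{2,\mathbf{2}}$ is read off the definition: a fully connected rank-$2$ first layer outputs the $n-1$ values $\max\{0,x_j\}$ for $j\in[n-2]$ and $\max\{x_{n-1},x_n\}$, one rank-$2$ neuron in the second layer then computes the maximum of the sum of the first $n-2$ of these and the last one, and the linear output layer forwards the result. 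Since $\networks{n}{2,\mathbf{2}}\subseteq\networks{n}{\ell,\rvec}$ for $\ell\geq2$ and $\networks{n}{\ell,\dvec,\rvec}\subseteq\networks{n}{\ell,\rvec}$ always, the only thing left is a separation, and here the strategy is: once $f\notin\MAX_n(n)$ is established, then since $n\geq m_\ell+1$ gives $\networks{n}{\ell,\dvec,\rvec}\subseteq\MAX_n(m_\ell+1)\subseteq\MAX_n(n)$ by \Cref{thm:inclusion_in_maxterms}, we obtain $f\in\networks{n}{\ell,\rvec}\setminus\networks{n}{\ell,\dvec,\rvec}$, so the inclusion is strict.

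The core of the argument is thus to prove $f\notin\MAX_n(n)$, for which I would invoke \Cref{lem:hyperplanetest} with the hyperplane $H=\{x\in\R^n:x_{n-1}=x_n\}$. After noting that $f$ is convex and \cpwl and rewriting it as $f(x)=\max\{S(x),x_{n-1},x_n\}$ with $S(x)\coloneqq\sum_{j=1}^{n-2}\max\{0,x_j\}$, I would describe its underlying polyhedral complex $\mathcal{P}$: the maximal cells are, for each $J\subseteq[n-2]$, the region $C_J=\{S\geq x_{n-1},\,S\geq x_n\}\cap\Pi_J$, where $\Pi_J$ is the sign chamber of the first $n-2$ coordinates on which $S(x)=\sum_{j\in J}x_j$, together with $C_{n-1}=\{x_{n-1}\geq x_n,\,x_{n-1}\geq S\}$ and $C_n=\{x_n\geq x_{n-1},\,x_n\geq S\}$. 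The cells in $\mathcal{P}^{n-1}$ are the pairwise intersections, and the key claim is that exactly one of them lies in $H$, namely $F^\star=C_{n-1}\cap C_n=\{x:x_{n-1}=x_n\geq S(x)\}$, which is genuinely $(n-1)$-dimensional; every other pairwise intersection forces an extra independent equality ($x_j=0$ for some $j\leq n-2$, or $S=x_{n-1}$, or $S=x_n$) that together with $x_{n-1}=x_n$ cuts the dimension down to at most $n-2$. Hence the set $T$ of \Cref{lem:hyperplanetest} equals $F^\star$ and is nonempty.

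It then remains to check that $F^\star$ contains no line. If $y+\R v\subseteq F^\star$, write $t\coloneqq v_{n-1}=v_n$; then $S(y+\lambda v)\leq y_n+\lambda t$ for all $\lambda\in\R$, and dividing by $\lambda$ and sending $\lambda\to\pm\infty$ gives $\sum_{j=1}^{n-2}\max\{0,v_j\}\leq t$ and $\sum_{j=1}^{n-2}\max\{0,-v_j\}\leq -t$; adding these forces $\sum_{j=1}^{n-2}|v_j|\leq0$, so $v_j=0$ for $j\leq n-2$, whence $S(y+\lambda v)=S(y)$ is constant in $\lambda$ and the surviving inequality $S(y)\leq y_n+\lambda t$ forces $t=0$ and hence $v=0$. \Cref{lem:hyperplanetest} then yields $f\notin\MAX_n(n)$, completing the separation. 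I expect the delicate step to be the one in the previous paragraph: cleanly enumerating the maximal cells of $\mathcal{P}$ and rigorously ruling out every candidate $(n-1)$-cell in $H$ other than $F^\star$; the limiting argument for line-freeness and the final bookkeeping with \Cref{thm:inclusion_in_maxterms} are comparatively routine.
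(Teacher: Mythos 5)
Your proposal is correct and follows essentially the same route as the paper: the same witness function $f$, the same application of \Cref{lem:hyperplanetest} with the hyperplane $x_{n-1}=x_n$, the same enumeration of maximal cells ruling out every $(n-1)$-face other than $F^\star$, the same limiting argument for line-freeness, and the same use of \Cref{thm:inclusion_in_maxterms} to conclude strictness.
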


For the particular setting $\dvec=\rvec=\mathbf{2}$, we obtain the following characterization.
Beyond this strict hierarchy, the class $\networks{n}{\ell,\mathbf{2},\mathbf{2}}$ can be characterized succinctly by relating it to the class $\MAX_n(m)$.
\begin{theorem}\label{thm:Nl22max2^l}
For all $\ell\in\N_0$, there holds
\begin{equation*}
    \networks{n}{\ell, \mathbf{2}, \mathbf{2}}= \MAX_n(2^\ell).
\end{equation*}
\begin{proof}
The inclusion $\networks{n}{\ell,\mathbf{2},\mathbf{2}}\subseteq \MAX_n(2^\ell)$ follows by Theorem~\ref{thm:inclusion_in_maxterms}.
Just note that $m_\ell+1=2^\ell$ for $\dvec=\rvec=\mathbf{2}$.

For the inclusion $\MAX_n(2^\ell)\subseteq \networks{n}{\ell,\mathbf{2},\mathbf{2}}$, consider an arbitrary $f\in \MAX_n(2^\ell)$.
Such a function is of the form
\begin{equation*}
    f(x)=\sum_{i=1}^p \beta_i\max\limits_{j\in[2^\ell]}\{a_{ij}^\top x\}
\end{equation*}
for some $p\in\N, \beta_i\in\R, a_{ij}\in\R^n$.
To check that $f$ is in $\networks{n}{\ell,\mathbf{2},\mathbf{2}}$, let us take a look at Figure~\ref{fig:network}.
Recall that the first layer of networks in $\networks{n}{\ell,\mathbf{2},\mathbf{2}}$ is not indegree constrained.
This allows to realize the linear terms $a_{ij}^\top x$ until the first activation.
The following $\ell$ hidden layers of rank-$2$-maxout units compute $\max_{j\in[2^\ell]}\{a_{ij}^\top x\}$ for $i\in[p]$.
Those terms can be combined linearly, with coefficients $\beta_i$, in the final layer, as desired.
\end{proof}
\end{theorem}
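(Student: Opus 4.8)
The plan is to prove the two inclusions separately. The inclusion $\networks{n}{\ell,\mathbf{2},\mathbf{2}} \subseteq \MAX_n(2^\ell)$ should be essentially immediate from Theorem~\ref{thm:inclusion_in_maxterms}: with $\dvec=\rvec=\mathbf{2}$ we have $d_i=2$ and $r_i=2$ for all $i$, so $m_\ell = \sum_{k=1}^\ell (r_k-1)\prod_{i=k+1}^\ell d_i = \sum_{k=1}^\ell 1\cdot 2^{\ell-k} = 2^\ell-1$, hence $m_\ell+1 = 2^\ell$ and Theorem~\ref{thm:inclusion_in_maxterms} gives $\networks{n}{\ell,\mathbf{2},\mathbf{2}} \subseteq \MAX_n(m_\ell+1) = \MAX_n(2^\ell)$.

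For the reverse inclusion $\MAX_n(2^\ell) \subseteq \networks{n}{\ell,\mathbf{2},\mathbf{2}}$, I would take an arbitrary $f \in \MAX_n(2^\ell)$, which by definition has the form $f(x) = \sum_{i=1}^p \beta_i \max_{j\in[2^\ell]}\{a_{ij}^\top x\}$ for some $p\in\N$, $\beta_i\in\R$, $a_{ij}\in\R^n$. The idea is to realize each summand by a sub-network and then combine them linearly in the output layer. Concretely: the first hidden layer is unconstrained (recall $d_1=n$), so it can output all the linear preactivations $a_{ij}^\top x$ needed for every $i\in[p]$. Then, for a fixed $i$, the maximum $\max_{j\in[2^\ell]}\{a_{ij}^\top x\}$ over $2^\ell$ arguments can be computed by a balanced binary tree of depth $\ell$ built from rank-$2$ maxout units each taking $2$ inputs from the previous layer — this uses associativity of $\max$, exactly as in the proof of Theorem~\ref{thm:deep_sparse_universal} and as illustrated in Figure~\ref{fig:network}. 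Running these $p$ trees in parallel across the $\ell$ hidden layers (using weight-$1$, rank-$2$, indegree-$2$ maxout units, and passing forward any values that are not yet at the top of their tree via a trivial $\max\{y,y\}$) yields, at the last hidden layer, the values $\max_{j\in[2^\ell]}\{a_{ij}^\top x\}$ for $i\in[p]$. Finally the linear output map $f_{\ell+1}$ takes the combination with coefficients $\beta_i$, so $f \in \networks{n}{\ell,\mathbf{2},\mathbf{2}}$.

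The only mild subtlety — and the place I would be most careful — is the bookkeeping showing this construction genuinely respects $\dvec=\rvec=\mathbf{2}$ at every hidden layer after the first: one must check that $2^\ell$ arguments collapse to a single value in exactly $\ell$ rank-$2$ steps, that values waiting at intermediate levels of their trees can be carried forward by indegree-$2$, rank-$2$ units without violating the constraints, and that the parallel composition of $p$ such trees still only uses indegree-$2$ maxout units (which it does, since each unit in each tree reads at most $2$ inputs regardless of how many trees run in parallel). Since this is a routine width-padding argument and the paper already performs the analogous construction in Theorem~\ref{thm:deep_sparse_universal}, it suffices to point to Figure~\ref{fig:network} and invoke associativity of the maximum rather than spell out the layer-by-layer indices.
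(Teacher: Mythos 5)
Your proposal is correct and follows essentially the same route as the paper: the forward inclusion via Theorem~\ref{thm:inclusion_in_maxterms} with the computation $m_\ell+1=2^\ell$, and the reverse inclusion by realizing the linear preactivations in the unconstrained first layer and collapsing each $2^\ell$-ary maximum through a depth-$\ell$ binary tree of rank-$2$, indegree-$2$ maxout units before the final linear combination. The extra bookkeeping you flag (carrying values forward and running the $p$ trees in parallel) is exactly what the paper leaves implicit by pointing to Figure~\ref{fig:network} and the construction in Theorem~\ref{thm:deep_sparse_universal}.
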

Since $\MAX_n(k) \subsetneq \MAX_n(k+1)$ \citep{hertrich2021towards,koutschan2023representing} for all $k \leq n$, we obtain the following corollary.
\begin{corollary}\label{cor:hierarchynetworks2}
\begin{equation*}
    \networks{n}{0,\mathbf{2},\mathbf{2}}\subsetneq\ldots\subsetneq \networks{n}{\lceil\log_2(n+1)\rceil,\mathbf{2},\mathbf{2}}=\hcpwl_n.
\end{equation*}
\end{corollary}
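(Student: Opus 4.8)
The plan is to reduce everything to the identification $\networks{n}{\ell,\mathbf{2},\mathbf{2}} = \MAX_n(2^\ell)$ from Theorem~\ref{thm:Nl22max2^l} and then track the index $2^\ell$ against $n$. First I would record the trivial monotonicity $\MAX_n(m)\subseteq\MAX_n(m+1)$, which holds because a maximum over $m$ affine-linear terms is unchanged by repeating one of them; iterating gives $\MAX_n(m)\subseteq\MAX_n(m')$ whenever $m\le m'$. Combined with the cited strictness $\MAX_n(k)\subsetneq\MAX_n(k+1)$ for $k\le n$, this yields $\MAX_n(2^\ell)\subsetneq\MAX_n(2^{\ell+1})$ for every $\ell$ with $2^\ell\le n$, via $\MAX_n(2^\ell)\subsetneq\MAX_n(2^\ell+1)\subseteq\MAX_n(2^{\ell+1})$.

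Next I would pin down the range of $\ell$. Writing $\ell^\star\coloneqq\lceil\log_2(n+1)\rceil$, one has $2^\ell\le n$ precisely for $\ell\in\{0,1,\ldots,\ell^\star-1\}$, since $\ell^\star$ is by definition the least integer with $2^{\ell^\star}\ge n+1$. Hence, for each such $\ell$, the previous step gives $\networks{n}{\ell,\mathbf{2},\mathbf{2}}=\MAX_n(2^\ell)\subsetneq\MAX_n(2^{\ell+1})=\networks{n}{\ell+1,\mathbf{2},\mathbf{2}}$, and chaining these inclusions for $\ell=0,\ldots,\ell^\star-1$ produces the asserted strict hierarchy up to level $\ell^\star$.

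Finally, for the terminal equality I would show $\MAX_n(2^{\ell^\star})=\hcpwl_n$. The inclusion $\MAX_n(m)\subseteq\hcpwl_n$ always holds since such functions are positively homogeneous and \cpwl. For the reverse, Theorem~\ref{thm:WangSun} says every $f\in\hcpwl_n$ is a linear combination of maxima of at most $n+1$ affine-linear functions, i.e.\ $\hcpwl_n\subseteq\MAX_n(n+1)$; since $2^{\ell^\star}\ge n+1$, monotonicity gives $\MAX_n(n+1)\subseteq\MAX_n(2^{\ell^\star})$, so all of these coincide with $\hcpwl_n$. Applying Theorem~\ref{thm:Nl22max2^l} once more turns this into $\networks{n}{\ell^\star,\mathbf{2},\mathbf{2}}=\hcpwl_n$, completing the statement.

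This is essentially bookkeeping on top of results already proved; the only point requiring care is the index arithmetic — ensuring that the strictness input $\MAX_n(k)\subsetneq\MAX_n(k+1)$, valid for $k\le n$, is applied at $k=2^\ell$ with $2^\ell\le n$, so that the hierarchy is strict through the last step $\MAX_n(2^{\ell^\star-1})\subsetneq\MAX_n(2^{\ell^\star})$ (legitimate because $2^{\ell^\star-1}\le n$) yet the counting stops exactly at $\ell^\star$. I expect no genuine obstacle beyond verifying these boundary inequalities, e.g.\ in the cases $n=2^m$ and $n=2^m-1$.
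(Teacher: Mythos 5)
Your proposal is correct and follows essentially the same route the paper takes: the corollary is derived directly from the identification $\networks{n}{\ell,\mathbf{2},\mathbf{2}}=\MAX_n(2^\ell)$ of Theorem~\ref{thm:Nl22max2^l} together with the cited strictness $\MAX_n(k)\subsetneq\MAX_n(k+1)$ for $k\leq n$, and your boundary check that $2^{\ell^\star-1}\leq n<2^{\ell^\star}$ for $\ell^\star=\lceil\log_2(n+1)\rceil$ is exactly the index arithmetic the paper leaves implicit. The only cosmetic quibble is that the terms in $\MAX_n(m)$ and in Theorem~\ref{thm:WangSun} are linear rather than affine-linear, but this does not affect the argument.
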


\section*{Discussion}
All in all, our work shows that the virtual polytopes associated with the functions computable by sparse maxout networks can help understanding them.
We have seen that our bound on their dimension provides a useful tool for analyzing expressivity, yielding various depth hierarchies and showing that width cannot fully compensate for given indegree constraints. The \emph{lineality space} of a positively homogeneous \cpwl function $f \colon \R^n \to \R$ is the vector space $\{v \in \R^n \mid f(x) = f(v+x) \text{ for all } x \in \R^n\}$. The codimension of the lineality space of a function $f$ equals the dimension of the virtual polytope $V_f$. The codimension of the lineality space was also used by \citet{koutschan2023representing} to find the smallest $k$ such that the function $f$ is contained in the vector space $\MAX_n(k)$. Consequently, by tracking the lineality space of a function computed by a sparse network, one could alternatively derive Theorem~\ref{thm:inclusion_in_maxterms}. However, formulating Lemma~\ref{lem:dimension_bound_vpolytopes} directly in terms of the lineality space, rather than in the dual setting of virtual polytopes, seems to be more involved.
\section*{Acknowledgments}
The authors thank Christoph Hertrich for many valuable discussions. 
We gratefully acknowledge partial support by the Deutsche Forschungsgemeinschaft (DFG, German Research Foundation) under Germany's Excellence Strategy -- The Berlin Mathematics Research Center MATH+ (EXC-2046/1, project ID: 390685689). Moritz Grillo was also partially supported by  project 464109215 within the priority programme SPP 2298 “Theoretical Foundations of Deep Learning,” 

\bibliographystyle{plainnat}
\bibliography{bibliography}

\end{document}